    \definecolor{DarkRed}{rgb}{0.368,0.097,0.078}
\definecolor{DarkBlue}{rgb}{0.2,0.2,0.6}
\declaretheoremstyle[
	    spaceabove=\topsep, 
	    spacebelow=\topsep, 
	    bodyfont=\normalfont\itshape,
    ]{theorem}
\declaretheorem[style=theorem,name=Theorem]{theorem}
\declaretheoremstyle[
	    spaceabove=\topsep, 
	    spacebelow=\topsep, 
	    bodyfont=\normalfont,
    ]{definition}
\declaretheoremstyle[
        spaceabove=\topsep, 
        spacebelow=\topsep, 
        % headfont=\normalfont\bfseries,
        bodyfont=\normalfont,
        notefont=\normalfont\bfseries,
        % notebraces={(}{)},
        notebraces={}{},
        % postheadspace=0.33em, 
        qed=$\blacksquare$, 
        % headpunct={},
        % postfoothook=\noindent\ignorespaces
    ]{proofstyle}
\declaretheorem[style=proofstyle,numbered=no,name=Proof]{proof}
\declaretheorem[style=theorem,sibling=theorem,name=Lemma]{lemma}
\declaretheorem[style=theorem,sibling=theorem,name=Corollary]{corollary}
\declaretheorem[style=theorem,numbered=no,name=Theorem]{theorem*}
\declaretheorem[style=theorem,numbered=no,name=Lemma]{lemma*}
\declaretheorem[style=theorem,numbered=no,name=Corollary]{corollary*}
\declaretheorem[style=theorem,numbered=no,name=Proposition]{proposition*}
\declaretheorem[style=theorem,numbered=no,name=Claim]{claim*}
\declaretheorem[style=theorem,numbered=no,name=Fact]{fact*}
\declaretheorem[style=theorem,numbered=no,name=Observation]{observation*}
\declaretheorem[style=theorem,numbered=no,name=Conjecture]{conjecture*}
\declaretheorem[style=definition,sibling=theorem,name=Definition]{definition}
\declaretheorem[style=definition,sibling=theorem,name=Remark]{remark}
\declaretheorem[style=definition,numbered=no,name=Definition]{definition*}
\declaretheorem[style=definition,numbered=no,name=Remark]{remark*}
\declaretheorem[style=definition,numbered=no,name=Example]{example*}
\declaretheorem[style=definition,numbered=no,name=Question]{question*}
\DeclareMathAlphabet{\mathbfsf}{\encodingdefault}{\sfdefault}{bx}{n}
\newcommand{\lr}[1]{\mathopen{}\left(#1\right)}
\newcommand{\lrbra}[1]{\mathopen{}\left[#1\right]}
\newcommand{\Lrbra}[1]{\mathopen{}\big[#1\big]}
\newcommand{\lrset}[1]{\mathopen{}\left\{#1\right\}}
\newcommand{\lrabs}[1]{\mathopen{}\left|#1\right|}
\newcommand{\cD}{\mathcal{D}}
\newcommand{\cO}{\mathcal{O}}
\newcommand{\cH}{\mathcal{H}}
\newcommand{\cX}{\mathcal{X}}
\newcommand{\cY}{\mathcal{Y}}
\newcommand{\cF}{\mathcal{F}}
\newcommand{\cG}{\mathcal{G}}
\newcommand{\hP}{\mathbb{P}}
\newcommand{\hI}{\mathbb{I}}
\newcommand{\categorial}{\mathrm{Categorial}}
\newcommand{\uniform}{\mathrm{Uniform}}
\newcommand{\fat}{\mathrm{fat}}
\newcommand{\med}{\mathrm{Median}}
\newcommand{\pseudo}{\mathrm{Pdim}}
\newcommand{\VC}{\mathrm{VC}}
\newcommand{\graphdim}{\mathrm{d}_\mathrm{G}}
\newcommand{\DS}{\mathrm{DS}}
\newcommand{\polylog}{\mathrm{Polylog}}
\newcommand{\hide}[1]{}
\newcommand{\R}{\mathbb{R}}
\newcommand{\X}{\mathcal{X}}
\newcommand{\F}{\mathcal{F}}
\newcommand{\Y}{\mathcal{Y}}
\newcommand{\K}{\mathcal{K}}
\renewcommand{\vec}[1]{\ensuremath{\text{{\bf\textrm{#1}}}}}
\newcommand{\set}[1]{\left\{#1\right\}}
\DeclarePairedDelimiter\norm{\lVert}{\rVert}  % ||norm||
\DeclarePairedDelimiter\abs{\lvert}{\rvert}
\newcommand{\beq}{\begin{eqnarray*}}
\newcommand{\eeq}{\end{eqnarray*}}
\newcommand{\beqn}{\begin{eqnarray}}
\newcommand{\eeqn}{\end{eqnarray}}
\newcommand{\argmin}{\mathop{\mathrm{argmin}}}
\newcommand{\paren}[1]{\left( #1 \right)}
\newcommand{\eps}{\varepsilon}
\newcommand{\bari}{{\hat\imath}}
\newcommand{\barj}{{\hat\jmath}}
\DeclarePairedDelimiter\inner{\langle}{\rangle}
\newcommand{\lagrangian}{\mathcal{L}}
\newcommand{\svm}{\operatorname{\texttt{SVM}}}
\title{Agnostic Sample Compression Schemes for Regression \footnote{Some of the results on linear regression presented in this paper (\Cref{subsec:impossible-exact-compression,subsec:exact-compression-l1-linfty}) previously appeared in the unpublished manuscript titled "Agnostic sample compression for linear regression" \cite{hanneke2018agnostic}.
}}
\author{
    Idan Attias \thanks{Department of Computer Science, Ben-Gurion University; \texttt{idanatti@post.bgu.ac.il}.} 
    \and  Steve Hanneke\thanks{Department of Computer Science, Purdue University; \texttt{steve.hanneke@gmail.com.}}
    \and  Aryeh Kontorovich\thanks{Department of Computer Science, Ben-Gurion University; \texttt{karyeh@cs.bgu.ac.il.}}
    \and  Menachem Sadigurschi\thanks{Department of Computer Science, Ben-Gurion University; \texttt{sadigurs@post.bgu.ac.il.}}
}
\begin{document}
\maketitle
% \tableofcontents

\begin{abstract}
We obtain the first positive results for
bounded sample compression in the
agnostic regression setting with the $\ell_p$ loss, where $p\in [1,\infty]$.
We construct a generic \emph{approximate} sample compression scheme for real-valued function classes 
exhibiting exponential size in the fat-shattering dimension but independent of the sample size.
Notably, for linear regression, an \emph{approximate} compression of size linear in the dimension is constructed.
Moreover, for $\ell_1$ and $\ell_\infty$ losses, we can even exhibit an efficient \emph{exact} sample compression scheme of size linear in the dimension.
We further show that for every other $\ell_p$ loss, $p\in (1,\infty)$, 
there does not exist an exact agnostic compression scheme of bounded size. 
This refines and generalizes a negative result of
\citet*{david2016supervised} for the $\ell_2$ loss.
We close by posing general open questions: for agnostic 
regression with $\ell_1$ loss, 
does every 
function class admits an exact compression scheme of size 
equal to its pseudo-dimension? For the $\ell_2$ loss, 
does every function class admit an approximate compression scheme of polynomial size in the fat-shattering dimension?
These questions generalize Warmuth's  
classic sample compression conjecture 
for realizable-case classification 
\citep{warmuth2003compressing}.
\end{abstract}

\section{Introduction}\label{sec:intro}
Sample compression is a central problem in learning theory, whereby one seeks
to retain a ``small'' subset of the labeled sample that uniquely defines a
``good'' hypothesis.
Quantifying {\em small} and {\em good} specifies the different variants of the problem.
For instance, in the classification setting, 
taking {\em small} to mean ``constant size'' (i.e., depending only on the VC-dimension $d$ of
the concept class but not on the sample size $m$) and {\em good} to mean ``consistent with the sample''
specifies the classic realizable
%PAC
sample compression problem
for VC classes.
The feasibility of 
%realizable PAC sample compression
the latter
was an open problem
between its being posed by
\citet*{Littlestone86relatingdata}
and its positive resolution by
\citet{moran2016sample},
with various intermediate steps in between
\citep*{floyd1989space,helmbold1992learning,DBLP:journals/ml/FloydW95,ben1998combinatorial,DBLP:journals/jmlr/KuzminW07,rubinstein2009shifting,rubinstein2012geometric,MR3047077,livni2013honest,moran2017teaching}.
A stronger form of this problem, where {\em small} means $\cO(d)$ (or even exactly $d$), remains open \citep{warmuth2003compressing}.

\citet*{david2016supervised}
recently 
generalized the definition of \emph{compression scheme}
to the agnostic case, where it is required that the 
function reconstructed from the compression set obtains 
an average loss on the full data set nearly as small 
as the function in the class that minimizes this quantity.
In \Cref{rem:compress-def}, we give a strong 
motivation for this criterion by arguing an equivalence 
to the generalization ability of the compression-based
learning algorithm. 
Under this definition, \citet{david2016supervised} 
extended the realizable-case result
for VC classes to cover the agnostic
case as well:
%sample compression problem for VC classes:
a bounded-size compression scheme
for the former
implies such a scheme (in fact, of the same size)
for the latter. They also generalized
from binary to multiclass concept families,
with the graph dimension in place of VC-dimension.
Proceeding to real-valued function classes, 
\citet{david2016supervised} came to 
a starkly negative conclusion:
they established
%establishing
that there is \emph{no} 
constant-size 
%(henceforth: ALR)
exact agnostic sample compression scheme 
for linear functions under the $\ell_2$ loss.
({\em Realizable} linear regression in $\R^d$ trivially
admits sample compression of size $d+1$, under any loss, 
by selecting a minimal subset that spans the data.)

\paragraph{Main results.}  
We are the first to construct bounded sample compression schemes for agnostic regression with $\ell_p$ loss, $p\in [1,\infty]$. \Cref{table:summary} summarizes our contributions in the context of previous results. 
We refer to an $\alpha$-approximate compression as one where the function reconstructed from the compression set achieves an average error at most $\alpha$ compared to the optimal function in the class. We consider the sample compression to be exact when we precisely recover this error. See \Cref{eq:exact-comression,eq:approximate-compression} for the precise definitions.

Our approach begins with proposing a boosting method (\Cref{alg:compression}) to construct an $\alpha$-\emph{approximate} sample compression scheme for agnostic $\ell_p$ regression, within function classes characterized by a finite fat-shattering dimension. The scheme has a size of $\Tilde{\cO}\lr{\fat\lr{\cF,c\alpha/p}\fat^*\lr{\cF,c\alpha/p}}$\footnote{$\Tilde{\cO}$ hides polylogarithmic factors in the specified expression.}, for some numerical constant $c>0$, as established by \Cref{thm:general-classes-compression}. Here, $\fat\lr{\cF,c\alpha/p}$ represents the fat-shattering dimension of function class $\cF$ at scale $c\alpha/p$, and $\fat^*$ is the dimension of the dual-class, which is finite as long as the dimension of the primal class is finite and can be at most exponentially larger, see \Cref{eq:dual-fat}. Notably, our compression size is independent of the sample size.
A major open question is how to improve the exponential dependence in the dimension, even in the realizable binary classification setting \citep{warmuth2003compressing}. 
% This question underscores the significance of our work.
%
 While such an approximate compression has been previously acknowledged in realizable regression \cite{hanneke2019sample}, and exact compression in agnostic binary classification \cite{david2016supervised}, in \Cref{sec:compression-general-classes} we delve into the details of our techniques and elucidate why methods previously suggested fall short in addressing agnostic regression.

We proceed with exploring linear regression. 
The negative result of \citet{david2016supervised} regarding the impossibility of achieving an \emph{exact} compression for linear regression with the $\ell_2$ (squared) loss
raises a general doubt over whether exact sample compression 
is ever a viable approach to agnostic learning of real-valued functions.
We address this concern by proving that, 
if we replace the $\ell_2$ loss with the $\ell_1$ or $\ell_\infty$ loss, 
then there \emph{is} a simple exact agnostic compression scheme  
of size $d+1$ for $\ell_1$ linear regression and $d+2$ for $\ell_\infty$ in $\R^d$, see \Cref{thm:ell1,thm:ellinfty}.  This is 
somewhat surprising, given the above negative result for the 
$\ell_2$ loss. 
Computationally, our compression schemes for $\ell_1$ and $\ell_\infty$ amount to solving a polynomial (in fact, linear)
size linear program.

We then propose \Cref{alg:lin-regression-lp} for an $\alpha$-approximate sample compression for $\ell_p$ linear regression   of size $\cO\lr{d\log(p/\alpha)}$, where $p\in (1,\infty)$, see \Cref{thm:lin-regression-approx-comprssion}. Roughly speaking, we reduce the problem to realizable binary classification with linear functions. Our approach involves introducing a discretized dataset on which the optimal solution of Support Vector Machine (SVM) pointwise approximates an optimal regressor on the original dataset.
We complement this result by showing that $p\in \lrset{1,\infty}$ are the \emph{only two} $\ell_p$ losses for which a constant-size compression scheme exists (\Cref{thm:our-lb}), generalizing the argument of \citet{david2016supervised}. 
% See \Cref{table:summary} for a summary of our contributions in the context of previous results. 

These appear to be the first positive results for bounded agnostic sample
compression for real-valued function classes.
We close by posing intriguing open questions generalizing our result 
to arbitrary function classes: under the $\ell_1$ loss, 
does \emph{every} function class admit an exact agnostic compression 
scheme of size equal to its pseudo-dimension?  under the $\ell_2$ loss, does \emph{every} function class admit an approximate agnostic compression of size equal to its fat-shattering dimension? We argue that this 
represents a generalization of Warmuth's classic 
sample compression problem, which asks whether every 
space of classifiers admits a compression scheme of size VC-dimension 
in the realizable case.

\paragraph{Related work.}
Sample compression scheme is a classic technique for proving generalization bounds, introduced by \citet{littlestone1986relating,floyd1995sample}. These bounds proved to be useful in numerous learning settings, particularly when the uniform convergence property does not hold or provides suboptimal rates,  such as binary classification \cite{graepel2005pac,moran2016sample,bousquet2020proper}, multiclass classification \cite{daniely2015multiclass,daniely2014optimal,david2016supervised,brukhim2022characterization}, regression \cite{hanneke2019sample,attias2023optimal}, active learning \cite{wiener2015compression}, density estimation \cite{ashtiani2020near}, adversarially robust learning \cite{montasser2019vc,montasser2020reducing,montasser2021adversarially,montasser2022adversarially,attias2022characterization,attias2023adversarially}, learning with partial concepts \cite{alon2022theory}, and showing Bayes-consistency for nearest-neighbor methods \cite{gottlieb2014near,kontorovich2017nearest}.
As a matter of fact, compressibility and learnability are known to be equivalent for general learning problems \cite{david2016supervised}.
A remarkable result by \citet{moran2016sample} showed that VC classes enjoy a sample compression that is independent of the sample size. 

\citet{david2016supervised} introduced sample compression in the context of regression. They showed that an exact compression scheme for $\ell_2$ agnostic linear regression requires a linear growth relative to the sample size.  Additionally, they showed that it is feasible to have an $\alpha$-approximate compression 
for zero-dimensional linear regression with a size of $\log(1/\alpha)/\alpha$. In a broader sense, they established the equivalence between learnability and the presence of an approximate compression in regression.

\citet{hanneke2019sample}
showed how to convert {\em consistent} real-valued learners into constant-size
(i.e., independent of sample size) efficiently computable approximate compression schemes 
for the realizable (or nearly realizable) regression with the $\ell_\infty$ loss.
This result was obtained via a weak-to-strong boosting procedure,
coupled with a generic construction of weak learners out of abstract regressors.
The {\em agnostic} variant of this problem remains open in its full generality.

\citet{ashtiani2020near} adapted the notion of a compression scheme
to the distribution learning problem.
They showed
that if a class of distributions admits
robust compressibility
then
it is agnostically learnable.

\begin{table}[H]
    \footnotesize
    \begin{center}
    \bgroup
    \setlength{\arrayrulewidth}{0.2mm} % lines thickness
    \setlength{\tabcolsep}{4.4pt} % column length
    \def\arraystretch{2} % row length
        \begin{tabular}{|c|c|c|c|}
            \hline
            %row1
            \textbf{Problem Setup} 
            & \textbf{Compression Type} 
            & \textbf{Compression Size} 
            & \textbf{Reference}
            \\
            \hline
            \hline
            %row2
            Realizable/Agnostic Binary Classification
            & Exact
            & $\cO\lr{\VC \cdot \VC^*}$ 
            & \cite{moran2016sample,david2016supervised}
            \\
            \hline
            %row3a
            \multirow{3}{*}{\centering{Realizable/Agnostic Multiclass Classification}}
            & \multirow{3}{*}{\centering{Exact}}
            & $\cO\lr{\graphdim \cdot \graphdim^*}$
            & \cite{david2016supervised}
            \\
            %row3b
            & 
            & $\cO\lr{\DS^{1.5}\cdot\polylog\lr{m}}$
            & \cite{brukhim2022characterization}
            \\
            %row3c
            &
            & $\Omega\lr{\log\lr{m}^{1-o(1)}}$
            & \cite{pabbaraju2023multiclass} 
            \\
            \hline
            %
            %row4a
            Realizable $\ell_\infty$ Regression
            & $\alpha$-Approximate
            & $\cO\lr{\fat_{c\alpha}\cdot \fat^*_{c\alpha}\cdot\polylog\lr{\fat_{c\alpha},\fat^*_{c\alpha},\frac{1}{\alpha}}}$
            & \cite{hanneke2019sample}
            \\
            \hline
            %rowa
            Agnostic $\ell_p$  Regression: $p\in\lr{1,\infty}$
            & \multirow{2}{*}{\centering{$\alpha$-Approximate}}
            & $\cO\lr{\fat_{c\alpha}\cdot \fat^*_{c\alpha}\cdot\polylog\lr{\fat_{c\alpha},\fat^*_{c\alpha},p,\frac{1}{\alpha}}}$
            & \multirow{2}{*}{\centering{This work}}
            \\
            %rowb
            Agnostic $\ell_p$  Regression: $p\in\lrset{1,\infty}$
            &
            & $\cO\lr{\fat_{c\alpha}\cdot \fat^*_{c\alpha}\cdot\polylog\lr{\fat_{c\alpha},\fat^*_{c\alpha},\frac{1}{\alpha}}}$  
            & 
            \\
            \hline
            %
            %row5a
            Agnostic $\ell_p$ Linear Regression: $p\in\lrset{1,\infty}$
            & Exact
            & $\cO\lr{d}$ 
            % \textcolor{blue}{correct?}
            & This work 
            \\
            %row5b
            Agnostic $\ell_p$ Linear Regression: $p\in\lr{1,\infty}$
            & $\alpha$-Approximate
            & $\cO\lr{d\cdot\log\lr{\frac{p}{\alpha}}}$
            & This work
            \\
            %row5c
            Agnostic $\ell_2$ Linear Regression
            & Exact
            & $\Omega\lr{m}$  
            & \cite{david2016supervised}
            \\
            %row5d
            Agnostic $\ell_p$ Linear Regression: $p\in \lrbra{1,\infty}$
            & Exact
            & $\Omega\lr{\log\lr{m}}$  
            & This work
            \\
            \hline
        \end{tabular}
    \egroup
    \end{center}
    \caption{\textbf{Sample compression schemes for classification and regression.}
    We denote the sample size by $m$, $c>0$ is a numerical constant. The $o(1)$ term vanishes as $m\rightarrow \infty$. 
    \textbf{(i) Binary Classification:} $\VC$ is the Vapnik-Chervonenkis dimension that characterizes realizable and agnostic learnability. Any dimension with $(\cdot)^*$ denotes the dimension of the dual-class.
    \textbf{(ii) Multiclass Classification: }$\graphdim$ is the Graph-dimension and $\DS$ is the Daniely-Shwartz dimension. For a finite set of labels, both dimensions characterize realizable and agnostic learnability.
    For an infinite set, only the finiteness of the DS dimension is equivalent to learnability. There exist learnable function classes with infinite graph dimension and finite DS dimension.
    (\textbf{iii) Regression:} $\fat_{c\alpha}$ is the fat-shattering dimension at scale $c\alpha$. A function class is agnostically learnable in this setting if and only if the fat-shattering dimension is finite for any scale. However, in the realizable case, there are learnable classes with infinite fat-shattering dimension.
    We comment that the results in \cite{hanneke2019sample} are stated for $\ell_\infty$, but still hold for any $\ell_p$ (with extra polylog factors in $p$) due to Lipschitzness of this loss.
    \textbf{(iv) Linear Regression:} $d$ is the vector space dimension. 
    We refer to \Cref{sec:open-problems} for open problems.
    }
    \label{table:summary}
\end{table}

%%%%%%%%%%%%%%%%%%%%%%%%%%%%%%%%%%%
\section{Preliminaries}
We denote $[m]:=\set{1,\ldots,m}$.
Let $\F\subseteq\Y^\X$ be a hypothesis class. The $\ell_p$ loss incurred by a hypothesis $f\in\F$ on $\lr{x,y}$ is given by  $\lr{x,y}\mapsto \lrabs{f(x)-y}^p$, where $p\in \lrbra{1,\infty}$.
For $p\in[1,\infty)$,
the
loss incurred by a hypothesis $f\in\F$
on a labeled sample
$S = \lrset{(x_i,y_i):i\in [m]}$
is given by
\begin{align}\label{eq:Lp}
  L_p(f,S):=\frac1m\sum_{i=1}^m|f(x_i)-y_i|^p,  
\end{align}
while for $p=\infty$,
\begin{align}\label{eq:Linf}
L_\infty(f,S):=\max_{1\le i\le m}|f(x_i)-y_i|.   
\end{align}
\begin{remark}
    % The $\ell_p$ regression objective is typically written in a non-root form for the sake of optimization algorithms. Since we do not take the $p$th root, as the $p$-norm defined, we cannot get $\ell_\infty$ when approaching $p$ to infinity.
    % Therefore, our results for $\ell_p$ contain an explicit dependence on $p$, similar to results in the literature.
    The $\ell_p$ regression objective is typically written 
    without taking the $p$th root    
%    in a non-root form 
%    for the sake of 
so as to facilitate
optimization algorithms.
    As we avoid taking the $p$-th root, the resulting $p$-norm formulation does not directly converge to $\ell_\infty$ as $p$ approaches infinity. Consequently, our $\ell_p$ results explicitly depend on $p$, similar to results in the literature.
\end{remark}
Now let us introduce a formal definition of sample compression, 
and a criterion we require of any valid \emph{agnostic compression scheme}.
Following the definition, we provide a strong motivation for this criterion 
in terms of an equivalence to the generalization ability of the learning algorithm
under general conditions.

\paragraph{Approximate and exact sample compression schemes.}
Following
\citet{david2016supervised},
we define
a {\em selection scheme} $(\kappa,\rho)$ for a hypothesis
class $\F\subset\Y^\X$
is defined as follows.
A $k$-{\em selection} function $\kappa$
maps sequences $\lrset{(x_1,y_1),\ldots,(x_m,y_m)}\in\bigcup_{\ell\ge1}\lrset{\X\times\Y}^\ell$
to elements in
$\K=\bigcup_{\ell\le k'}\lrset{\X\times\Y}^\ell
\times
\bigcup_{\ell\le k''}\set{0,1}^\ell$,
where $k'+k''\le k$.
A {\em reconstruction} is a function $\rho:\K\to\Y^\X$.
We say that $(\kappa,\rho)$ is a $k$-size agnostic \emph{exact} sample compression
scheme for $\F$
if
$\kappa$ is a $k$-selection and
for all
$S = \lrset{(x_i,y_i):i\in [m]}$,
$f_S := \rho(\kappa(S))$
achieves $\F$-competitive empirical loss:
\begin{align}\label{eq:exact-comression}
    L_p(f_S,S) &\le \inf_{f\in\F}L_p(f,S).
\end{align}

We also define a relaxed notion of agnostic $\alpha$-\emph{approximate} sample compression in which $f_S$ should satisfy
\begin{align}\label{eq:approximate-compression}
   L_p(f_S,S) &\le \inf_{f\in\F}L_p(f,S)+\alpha. 
\end{align}

In principle, the \emph{size} $k$ of an agnostic compression scheme may 
depend on the data set size $m$, in which case we may denote this dependence by $k(m)$.
However, in this work we are primarily interested in the case when $k(m)$ is \emph{bounded}: 
that is, $k(m) \leq k$ for some $m$-independent value $k$.
Note that the above definition is fully general, in that it defines a 
notion of agnostic compression scheme for \emph{any} function class $\F$ 
and loss function $L$, though in the present work we focus on $L_p$ loss for $1 \leq p \leq \infty$.

\begin{remark}
  \label{rem:compress-def} 
At first, it might seem unclear why this is 
an appropriate generalization of sample compression 
to the agnostic setting.  To see that it is so, 
we note that one of the main interests in 
sample compression schemes is their 
ability to \emph{generalize}: that is, to achieve 
\emph{low excess risk} under a \emph{distribution} $P$ on $\X\times\Y$ 
when the data $S$ are sampled iid according to $P$
\citep*{Littlestone86relatingdata,DBLP:journals/ml/FloydW95,DBLP:journals/ml/GraepelHS05}.
Also, as mentioned, in this work we are primarily interested in sample compression schemes 
that have \emph{bounded size}: $k(m) \leq k$ for an $m$-independent value $k$.
Furthermore, we are also focusing on the most-general case, where this size bound
should be independent of everything else in the scenario, 
such as the data $S$ or the underlying distribution $P$.
Given these interests, we claim that the above definition 
is essentially the only reasonable choice.  More specifically, for $L_p$ loss with $1 \leq p < \infty$, 
any compression scheme with $k(m)$ bounded such that its 
expected excess risk under any $P$ converges to $0$ as $m \to \infty$ 
necessarily satisfies the above condition (or is easily converted into one that does).  
To see this, note that for any data set $S$ for which such a compression scheme 
fails to satisfy the above $\F$-competitive empirical loss criterion, 
we can define a distribution $P$ that is simply uniform on $S$, 
and then the compression scheme's selection function would be choosing 
a bounded number of points from $S$ and a bounded number of bits, 
while guaranteeing that excess risk under $P$ approaches $0$, 
or equivalently, excess empirical loss approaches $0$.  To make this 
argument fully formal, only a slight modification is needed, to handle 
having multiple copies of points from $S$ in the compression set; 
given that the size is bounded, these repetitions can be encoded in 
a bounded number of extra bits, so that we can stick to strictly distinct 
points in the compression set.

In the converse direction, we also note that any bounded-size 
agnostic compression scheme (in the sense of the above definition) 
will be guaranteed to have excess risk under $P$ converging to $0$ as $m \to \infty$, 
in the case that $S$ is sampled iid according to $P$, for losses $L_p$ with $1 \leq p < \infty$, 
as long as $P$ guarantees that $(X,Y) \sim P$ has $Y$ bounded (almost surely).
This follows from classic arguments about the generalization ability of compression schemes, 
which includes results for the agnostic case
\citep*{DBLP:journals/ml/GraepelHS05}.
For unbounded 
$Y$ one cannot, in general, obtain distribution-free generalization bounds.
However, one can still obtain generalization under certain broader restrictions 
(see, e.g., \citealp{DBLP:journals/jacm/Mendelson15} and references therein).
The generalization problem becomes more subtle for the $L_\infty$ loss:
this cannot be expressed as a sum of pointwise losses and 
there are no standard techniques for bounding the deviation of the sample risk from the true risk.
One recently-studied guarantee achieved by minimizing empirical $L_\infty$ loss 
is a kind of ``hybrid error'' generalization, developed in 
\citet[Theorem 9]{hanneke2019sample}.
We refer the interested reader to that work for the details of those results, 
which can easily be extended to apply to our notion of an agnostic compression scheme.
\end{remark}

\paragraph{Complexity measures.}
Let $\cF\subseteq[0,1]^\cX$ and $\gamma > 0$. We say that $S = \{x_1, \ldots, x_m\} \subseteq \cX$ is $\gamma$-shattered by $\cF$ if there exists a witness $r = (r_1, \ldots,r_m) \in [0,1]^m$ such that for each $\sigma = (\sigma_1, \ldots, \sigma_m) \in \{-1, 1\}^m$ there is a function $f_{\sigma} \in \cF$ such that
\[
\forall i \in [m] \; 
\begin{cases}
	f_{\sigma}(x_i) \geq r_i + \gamma, & \text{if $\sigma_i = 1$}\\
    f_{\sigma}(x_i) \leq r_i - \gamma, & \text{if $\sigma_i = -1$.}
 \end{cases}
\]
The \emph{fat-shattering dimension} of $\cF$ at scale $\gamma$, denoted by $\fat(\cF,\gamma)$, is 
the cardinality of the largest set of points in $\cX$ that can be $\gamma$-shattered by $\cF$.
This parametrized variant of the Pseudo-dimension \citep{alon1997scale}
was first proposed by \citet{kearns1994efficient}.
Its key role in learning theory
lies in characterizing the PAC learnability
of real-valued function classes
\citep{alon1997scale,bartlett1998prediction}.
We also define the dual dimension. Define the dual class $\cF^* \subseteq [0,1]^\cF$ of $\cF$ as the set of all functions $g_w: \cF \rightarrow [0,1]$ defined by $g_w(f) = f(w)$. If we think of a function class as a matrix whose rows and columns are indexed by functions and points, respectively, then the dual class is given by the transpose of the matrix. 
The \emph{dual fat-shattering dimension} at scale $\gamma$, is defined as the fat-shattering at scale $\gamma$ of the dual-class and denoted by $\fat^*\lr{\cF,\gamma}$.
We have the following bound due to \citet[Corollary 3.8 and inequality 3.1]{kleer2021primal},
\begin{align}\label{eq:dual-fat}
   \fat^*\lr{\cF,\gamma} 
   \lesssim 
   \frac{1}{\gamma}2^{\fat\lr{\cF,\gamma/2}+1}.
\end{align}
\section{Approximate Agnostic Compression for Real-Valued Function Classes}\label{sec:compression-general-classes}
In this section, we construct an approximate compression scheme for all real-valued function classes that are agnostically PAC learnable, that is, classes with finite fat-shattering dimension at any scale \citep{alon1997scale,bartlett1998prediction}. We prove the following main result. 

\begin{theorem}[Approximate compression for agnostic regression]\label{thm:general-classes-compression}
Let $\cF \subseteq [0,1]^\cX$, $S = \lrset{(x_i,y_i): i\in [m]}\subseteq \cX \times [0,1]$, an approximation parameter $\alpha\in [0,1]$, a weak learner parameter $\beta \in (0,1/2]$, and $\ell_p$ loss where $p\in [1,\infty]$.
By setting \Cref{alg:compression} with $T \leftarrow \cO\lr{\frac{1}{\beta^2}\log(m)}$ and 
%
% \small
\[
\begin{dcases}
    d \leftarrow \Tilde{\cO}\lr{\fat\lr{\cF,c\alpha/p}
    % \log^2\lr{\frac{p\,\fat\lr{\cF,c\alpha/p}}{\alpha(1/2-\beta)}}
    },
    n \leftarrow
        \Tilde{\cO}\lr{\frac{\fat^*\lr{\cF,c\alpha/p}}{\beta^2}
        % \log^2\lr{\frac{p\,\fat^*\lr{\cF,c\alpha/p}}{\alpha}}
        }, p\in [1,\infty)\\
   d\leftarrow
   \Tilde{\cO}\lr{\fat\lr{\cF,c\alpha}
   % \log^2\lr{\frac{\,\fat\lr{\cF,c\alpha}}{\alpha(1/2-\beta)}}
   },
    n \leftarrow
        \Tilde{\cO}\lr{\frac{\fat^*\lr{\cF,c\alpha}}{\beta^2}
        % \log^2\lr{\frac{\,\fat^*\lr{\cF,c\alpha}}{\alpha}}
        }, p=\infty,
\end{dcases}
\]  
% \normalsize
we get an $\alpha$-approximate sample compression scheme of size
\[
\begin{dcases}
	\Tilde{\cO}\lr{\frac{1}{\beta^2}\fat\lr{\cF,c\alpha/p}\fat^*\lr{\cF,c\alpha/p}
 % \log^2\lr{\frac{p\,\fat\lr{\cF,c\alpha/p}}{\alpha(1/2-\beta)}}
 % \log^2\lr{\frac{p\,\fat^*\lr{\cF,c\alpha/p}}{\alpha}}
 }, p\in [1,\infty)\\
    \Tilde{\cO}\lr{\frac{1}{\beta^2}\fat\lr{\cF,c\alpha}\fat^*\lr{\cF,c\alpha}
    % \log^2\lr{\frac{\,\fat\lr{\cF,c\alpha}}{\alpha(1/2-\beta)}}\log^2\lr{\frac{\,\fat^*\lr{\cF,c\alpha}}{\alpha}}
    }, p=\infty,
 \end{dcases}
\]
for some universal constant $c>0$. Recall that the dual fat-shattering is at most exponential in the primal dimension, see \cref{eq:dual-fat}. $\Tilde{\cO}(\cdot)$ hides polylogarithmic factors of $\lr{\fat,\fat^*,p,1/\alpha,1/\beta}$.
\end{theorem}
Our algorithm incorporates a boosting approach for real-valued functions. Therefore, we need a definition of weak learners in this context. 
\begin{definition}[Approximate weak real-valued learners]\label{def:weak-learner}
Let $\beta\in(0,\frac{1}{2}]$, $\alpha\in (0,1)$.
We say that $g: \cX \rightarrow [0,1]$ is an \emph{approximate} $(\alpha,\beta)$-weak learner, with respect to $P$ and a target function $f^\star\in\cF$ if
\begin{equation*}
\hP_{(x,y)\sim P}\lrset{(x,y):\lrabs{g(x)-y}>\lrabs{f^\star(x)-y}+\alpha}\leq
\frac{1}{2}-\beta.
\end{equation*}
\end{definition}
This notion of a weak learner must be formulated carefully. 
For example, taking a learner guaranteeing absolute loss at most $\frac{1}{2}-\beta$ is known to not be strong enough for boosting to work, see the discussion in \citet[Section 4]{hanneke2019sample}. On the other hand, by making the requirement too strong (for example, $\mathrm{AdaBoost.R}$ in \citet{freund1997decision}), then the sample complexity of weak learning will be high that weak learners cannot be expected to exist for certain function classes. 
We can now present the main algorithm. 
\begin{algorithm}[]
    \caption{Approximate Agnostic Sample Compression for $\ell_p$ Regression, $p \in[1,\infty]$}\label{alg:compression}
    \textbf{Input}: $\cF\subseteq [0,1]^\cX$, $S=\lrset{\lr{x_i,y_i}: i\in [m]}\subseteq \cX \times [0,1]$.\\
    \textbf{Parameters}: Approximation parameter $\alpha\in (0,1)$, weak learner parameter $\beta \in (0,1/2]$, weak learner sample size $d\geq 1$, sparsification parameter $n\geq 1$, number of boosting rounds $T\geq 1$, loss parameter $p\in \lrbra{1,\infty}$.\\
    \textbf{Initialize}: $P_1 \leftarrow \uniform(S)$.                 
    %
    % \begin{enumerate}[leftmargin=0.8cm,label=(\roman*)]\setlength{\itemsep}{-1pt}
    %     \item[] $P_1 \leftarrow \uniform(S)$.
    %     %
    %     \item[] $d\leftarrow
    %     \cO\lr{\fat\lr{\cF,c\alpha/p}\log^2\lr{\frac{p\,\fat\lr{\cF,c\alpha/p}}{\alpha(1/2-\beta)}}}$
    %     \item[] $n \leftarrow 
    %     \cO\lr{\frac{1}{\beta^2}\fat^*\lr{\cF,c\alpha/p}\log^2\lr{\frac{p\,\fat^*\lr{\cF,c\alpha/p}}{\alpha}}}$.
    %     \item[] $T\leftarrow \cO\lr{\frac{1}{\beta^2}\log(m)}$.
    %     %
    % \end{enumerate}
    %
    \begin{enumerate}[leftmargin=0.35cm]
    \item[$\triangleright$]\textcolor{DarkBlue}{Find an optimal function in $f^\star\in\cF$. Our goal is to construct a function that pointwise approximates $f^\star$ on $S$}
    \item Compute:
        \begin{enumerate}
            \item 
            % $f^\star\leftarrow \argmin_{f\in\cF}\frac{1}{m}\sum_{(x,y)\in S}
            % \lrabs{f(x)-y}$ 
            $f^\star\leftarrow \argmin_{f\in\cF}L_p(f,S)$ (defined in \cref{eq:Lp,eq:Linf}).
            % \textcolor{blue}{Check: $\ell_1$ or $\ell_\infty$? probably $\ell_p$}
            %
            \item $\psi(x,y) \leftarrow|f^\star(x)-y|,\; \forall (x,y)\in S$. 
            % If $\psi(x,y)=0$, set $\psi(x,y) \leftarrow \alpha$. 
        \end{enumerate}
    \item[$\triangleright$]\textcolor{DarkBlue}{Median boosting for real-valued functions}
    \item For $t=1,\ldots,T$:
        \begin{enumerate}
            \item Get an $(2\alpha,\beta)$-approximate weak learner $\hat{f}_t$ with respect to distribution $P_t$: \\
            Find a multiset $S_t\subset S$ of $d$ points such that for any ${f}\in\cF$ with $\lrabs{{f}(x)-y}\leq \psi(x,y)+\alpha$ $\forall{(x,y)\in S_t}$,
it holds that $\hP_{(x,y)\sim P_t}\lrset{(x,y): \lrabs{f(x)-y} > \psi(x,y)+2\alpha}\leq 1/2- \beta$. ($S_t$ exists from \Cref{thm:generalization-interpolation-cutoffs}).
            \item For $i=1,\ldots,m$:\\
                Set $w^{(t)}_i \leftarrow 1-2\hI\Lrbra{\lrabs{\hat{f}_t(x_i)-y_i}>\psi(x_i,y_i)+2\alpha}.$
                \item Set $\alpha_t \leftarrow
                \frac{1}{2}\log\lr{\frac{\lr{1-\beta}\sum^m_{i=1} P_t\lr{x_i,y_i}\hI\lrbra{w_i^{(t)}=1}}{\lr{1+\beta}\sum^m_{i=1} P_t\lr{x_i,y_i}\hI\lrbra{w_i^{(t)}=-1}}}.$
                \item 
                If $\alpha_t=\infty$: \\
                return $T$ copies of $\hat{f}_t$, $\lr{\alpha_1=1,\ldots,\alpha_T=1}$, $S_t$.\\
                Else: \\
                $ P_{t+1}(x_i,y_i) \leftarrow  P_{t}(x_i,y_i) \frac{\exp\lr{-\alpha_t w_i^{t}}}{\sum^m_{j=1} P_t\lr{x_j,y_j}\exp\lr{-\alpha_t w_j^{t}}}.$
        \end{enumerate}
    \item[$\triangleright$]\textcolor{DarkBlue}{Sparsifying the weighted ensemble $\lrset{\hat{f}_i}^T_{i=1}$ returned from boosting via sampling}
    \item 
    Repeat:
        \begin{enumerate}
            \item Sampling: \\
            $\lr{J_1,...\,,J_n}\sim \categorial\lr{\frac{\alpha_1}{\sum^T_{s=1}\alpha_s},...\,,\frac{\alpha_T}{\sum^T_{s=1}\alpha_s}}^n$.
            \item Let $\Tilde{\cF}=\lrset{f_{J_1},\ldots,f_{J_n}}$.
            \item Until $\forall (x,y)\in S:\\
          \lrabs{\lrset{f\in\Tilde{\cF}:\lrabs{f(x)-y}>\psi(x,y)+3\alpha}}<n/2$.
        \end{enumerate}
    \end{enumerate}
    \textbf{Compression:} Multisets $S_{J_1},\ldots, S_{J_n}$ and cut-offs $\psi\vert_{S_{J_1}},\ldots,\psi\vert_{S_{J_n}}$ corresponding to the weak learners in $\Tilde{F}$.
    \\
    \textbf{Reconstruction:} Reconstruct weak learners $f_{J_i}$ from $S_{J_i}$ and $\psi\vert_{S_{J_i}}$, $i\in [n]$, and output their median $\med\lr{f_{J_1},\ldots,f_{J_n}}$.
\end{algorithm}
\paragraph{The challenges beyond realizable regression and agnostic classification.}
 There is a crucial difference from previous boosting algorithms for real-valued used by \citet{kegl2003robust,hanneke2019sample} in the realizable case. In our approach, the cut-offs $\psi(x,y)$ are allowed to vary across different points, in contrast to a fixed cut-off applied uniformly across all points. This flexibility enables us to address the agnostic setting, wherein the loss of an optimal minimizer may differ across various points in the sample. 
To prove the existence of weak learners we are required to have a generalization theorem that is compatible with changing cut-offs, see \Cref{thm:generalization-interpolation-cutoffs}. A similar generalization result was used in the context of adversarially robust learning \citep{attias2023adversarially}.

The compression approach for agnostic binary classification, as discussed in \citep{david2016supervised}, encounters a similar challenge. In this method, our initial emphasis is on identifying the points correctly classified by an optimal function in the class. Subsequently, we apply compression techniques for realizable classification. 
However, in regression, discarding points where the optimal function makes mistakes is not feasible, given that the loss is not strictly zero-one. Instead, we utilize the entire sample, targeting the error for each point and constructing a function with a similar approximated error on each point. 

\paragraph{Proof overview.}
First, we show that the returned output of \Cref{alg:compression} is a valid compression. Then we bound the size of this compression.

\textit{Approximate compression correctness.}
In step 1, we compute some $f^\star\in\cF$ the minimizes the empirical $\ell_p$ error on the sample $S$,
$$f^\star
\leftarrow 
\argmin_{f\in\cF} L_p(f,S),$$
as defined in \cref{eq:Lp,eq:Linf}.
Let $\psi: \cX \times \cY \rightarrow [0,1]$ be the $\ell_1$ loss of $f^\star$ on each point in $S$, $$\psi(x,y) \leftarrow|f^\star(x)-y|,\; \forall (x,y)\in S.$$ 
 
In step 2, we implement a boosting algorithm, following \Cref{def:weak-learner} of weak learners.
By using \Cref{thm:generalization-interpolation-cutoffs} with  $\delta=1/3$ and $\eps=1/2-\beta$. For any distribution $P_t$ on $S$, upon receiving an i.i.d. sample $S_t\subseteq S$ from $P_t$ of size 
$$d= \cO\lr{\fat\lr{\cF,\alpha/8}\log^2\lr{\frac{\fat\lr{\cF,\alpha/8}}{\alpha(1/2-\beta)}}},$$ 
with probability $2/3$ over sampling $S_t$ from $P_t$, 
for any ${f}\in\cF$ satisfying $\forall (x,y)\in S_t:\; \lrabs{{f}(x)-y}\leq \psi(x,y)+\alpha$, 
it holds that $$\hP_{(x,y)\sim P_t}\lrset{(x,y): \lrabs{f(x)-y}>\psi(x,y)+2\alpha}\leq \frac{1}{2}-\beta.$$
That is, such a function is an approximate $\lr{2\alpha,\beta}$-weak learner for $P_t$ and $f^\star$. Since this holds with probability $2/3$, there must be such $S_t\subseteq S$. In order to construct an approximate $(2\alpha,\beta)$-weak learner $\hat{f}_t$, we need to find ${f}\in\cF$ such that $\forall (x,y)\in S_t:\; \lrabs{{f}(x)-y}\leq \psi(x,y)+\alpha$, and so the weak learner can be encoded by $S_t$ of size $d$ and the set of cut-offs $\psi(x,y)\in [0,1]$ for all $(x,y)\in S_t$. We encode only approximations of the cut-offs to keep the compression size bounded (see the paragraph about the compression size below).
For $T=\cO\lr{\frac{1}{\beta^2}\log(m)}$ rounds of boosting, \Cref{lem:boosting} guarantees that for all $(x,y)\in S$ the output of the boosting algorithm satisfies
\begin{align*}
\lrabs{
\med\lr{\hat{f}_1,\ldots,\hat{f}_T;\alpha_1,\ldots,\alpha_T}\lr{x}-y
} 
&\leq 
\psi(x,y)+2\alpha.
\end{align*}
Finally, we use sampling to reduce the number of hypotheses in the ensemble from $\cO\lr{\frac{1}{\beta^2}\log(m)}$ to size that is independent of $m$. \Cref{lem:sparsification} implies that the sparsification method in Step 3 ensures that we can sample  $$n=\cO\lr{\fat^*\lr{\cF,c\alpha}\log^2\lr{\fat^*\lr{\cF,c\alpha}/\alpha}}$$ such that for all $(x,y)\in S$
    $$
     \lrabs{\med\lr{f_{J_1}(x),\ldots,f_{J_n}(x)}-y} \leq \psi(x,y)+3\alpha,
    $$
where $c>0$ is an absolute constant.
By rescaling $3\alpha$ to $\alpha$, this proves the $\ell_1$ and $\ell_\infty$ losses.
For $p\in(1,\infty)$, we use the Lipschitzness of the $\ell_p$ loss and rescale the approximate parameter accordingly. 
We constructed a function $h$ with $\lrabs{h(x)-y} \leq \psi(x,y)+\alpha$ for any $(x,y)\in S$, which implies
\begin{align*}
\lrabs{h(x)-y}^p 
&\overset{(i)}{\leq} 
\lr{\lr{\psi(x,y)}+\alpha}^p 
% \\
&\overset{(ii)}{\leq}
\psi(x,y)^p+p\alpha,
\end{align*}
and that will finish the proof. (i) Follows by just raising both sides to the power of $p$.
(ii) Follows since the function $x\mapsto \lrabs{x-y}^p$ is $p$-Lipschitz for $(x-y)\in [0,1]$, and so
\begin{align*}
    \lrabs{\lr{\psi(x,y)+\alpha}^p - \psi(x,y)^p}
    &\leq
    p\lrabs{\psi(x,y)+\alpha - \psi(x,y)}
    \\
    &\leq
    p\alpha.
\end{align*}
By rescaling $p\alpha$ to $\alpha$, we get 
\begin{align*}
        \lrabs{\med\lr{f_{J_1}(x),\ldots,f_{J_n}(x)}-y}^p 
        &\leq 
        \psi(x,y)^p+\alpha
        ,
\end{align*}
where 
$$n=\Theta\lr{\frac{1}{\beta^2}\fat^*\lr{\cF,c\alpha/p}\log^2\lr{\frac{p\,\fat^*\lr{\cF,c\alpha/p}}{\alpha}}},$$
and 
$$d= \cO\lr{\fat\lr{\cF,c\alpha/p}\log^2\lr{\frac{p\,\fat\lr{\cF,c\alpha/p}}{\alpha(1/2-\beta)}}}.$$ 
We proved the correctness of an $\alpha$-approximate compression
$$L_p(\med\lr{f_{J_1},\ldots,f_{J_n}},S) 
\leq 
\inf_{f\in\cF} L_p(f,S) + \alpha.$$

\textit{Approximate compression size.}
Each weak learner is encoded by a multiset $S'\subseteq S$ of size $d$ and is constructed by computing some $f'\in \cF$ that solves the  constrained optimization 
$$ \lrabs{{f}'(x)-y}\leq \psi(x,y), \; \forall (x,y)\in S'.$$
We encode each $\psi(x,y)$ by some approximation $\Tilde{\psi}(x,y)$, such that $\lrabs{\Tilde{\psi}(x,y)-\psi(x,y)}\leq \alpha$, by discretizing $[0,1]$ to buckets of size $1/\alpha$, and each $\psi(x,y)$ is rounded down to the closest value $\Tilde{\psi}(x,y)$. Each approximation requires to encode $\log\lr{1/\alpha}$ bits, and so each learner encodes $d\log\lr{1/\alpha}$ bits and $d$ samples. We have $n$ weak learners, and the compression size is 
\begin{align*}
    n(d+d\log\lr{1/\alpha})
    &\leq
    2nd\log\lr{1/\alpha}.
\end{align*}
Plugging in $n$ and $d$, we conclude
\[
\begin{dcases}
	\Tilde{\cO}\lr{\frac{1}{\beta^2}\fat\lr{\cF,c\alpha/p}\fat^*\lr{\cF,c\alpha/p}
 }, p\in [1,\infty)\\
    \Tilde{\cO}\lr{\frac{1}{\beta^2}\fat\lr{\cF,c\alpha}\fat^*\lr{\cF,c\alpha}
    }, p=\infty.
 \end{dcases}
\]

\section{Agnostic Compression for Linear Regression}\label{sec:linear-regression}
In this section, our focus is on $\ell_p$ linear regression in $\R^d$. We begin by improving upon the construction of an approximate sample compression scheme for general classes, incorporating the structure of linear functions.
Next, we demonstrate the feasibility of constructing an exact compression for $p\in \lrset{1,\infty}$ with a size linear in $d$. In sharp contrast, we exhibit that this holds only for $p\in \lrset{1,\infty}$. 
We prove an impossibility result of achieving a bounded-size exact compression scheme for $p\in (1,\infty)$. 
% In sharp contrast, we then demonstrate the feasibility of constructing an exact compression for $p\in \lrset{1,\infty}$ with a size linear in $d$.
 
We use the following notation. Vectors $\vec v\in\R^d$ are denoted by boldface,
and their $j$th coordinate is indicated by $\vec v(j)$.
(Thus, $\vec v_i(j)$ indicates the $j$th coordinate of the $i$th vector
in a sequence.)
\subsection{Approximate Compression for $p\in \lrbra{1,\infty}$}\label{subsec:approximate-compression-linear}
In this subsection, our instance space is $\X=[0,1]^d$,
label space is $\Y=[0,1]$,
and hypothesis class is bounded homogeneous linear functions $\F\subseteq\Y^\X$, consisting of all
$f_{\vec w}:\X\to\Y$ given by
$f_{\vec w}(\vec x)=\inner{\vec w,\vec x}$,
indexed by $\vec w \in \R^d$, where $\norm{\vec w}_2\leq 1$.

In \Cref{sec:compression-general-classes} we proved an approximate compression for general function classes with $\ell_p$ losses of size
$\cO\lr{\fat_{c\alpha/p}\cdot \fat^*_{c\alpha/p}\cdot\polylog\lr{\fat_{c\alpha/p},\fat^*_{c\alpha/p},p,1/\alpha}}.$
We have an immediate corollary for linear functions. Let $\pseudo\lr{\cF}$ be the pseudo-dimension of a function class $\F$ \cite{pollard1990empirical,haussler1992decision}, that can be defined as
$\pseudo(\cF) = \underset{\gamma \to\ 0}{\lim}\fat_\gamma(\cF)$.
The fat-shattering dimension (at any scale) is upper bounded by the pseudo-dimension. 
Moreover, the vector space dimension is of the same order as the pseudo-dimension \citep{anthony1999neural}, and the dimension of the dual vector space is equal to the one of the primal space. This implies the following.
\begin{corollary}\label{cor:approx-comprssion-linear}
\cref{alg:compression} is a 
    sample compression scheme of size $\cO\lr{d^2\cdot\polylog\lr{d,p,\frac{1}{\alpha}}}$ for bounded linear regression in dimension $d$ with the $\ell_p$ loss, for $p\in [1,\infty]$.
\end{corollary}
In this section, we improve upon the result by using a dedicated algorithm for linear functions. We prove the following:
% \textcolor{blue}{do we need $|y_i|\leq 1$?}
%
\begin{theorem}[Approximate compression for agnostic linear regression]\label{thm:lin-regression-approx-comprssion} 
Let $\cF = \lrset{\vec x\mapsto \inner{\vec w, \vec x}: \vec w \in \R^d, \norm{\vec w}_2\leq 1 }$, $S=\lrset{\lr{\vec x_i,y_i}:  \norm{\vec x_i}_2 \leq 1, \forall i\in [m]}\subseteq \cX\times [0,1]$, and an approximation parameter $\alpha\in (0,1)$.
\Cref{alg:lin-regression-lp} is an $\alpha$-approximate sample compression scheme for the $\ell_p$ loss of size
\[ 
\begin{dcases}
    \cO\lr{d\cdot\log\lr{\frac{p}{\alpha}}}, & p\in [1,\infty) \\
     \cO\lr{d\cdot\log\lr{\frac{1}{\alpha}}}, & p=\infty. 
\end{dcases}   
\]
\end{theorem}
\begin{algorithm}[H]
    \caption{Approximate Agnostic Compression for $\ell_p$ Linear Regression, $p \in[1,\infty]$}\label{alg:lin-regression-lp}
    \textbf{Input}:
    $\cF = \lrset{\vec x\mapsto \inner{\vec w, \vec x}: \vec w \in \R^d, \norm{\vec w}_2\leq 1 }$, $S=\lrset{\lr{\vec x_i,y_i}:  \norm{\vec x_i}_2 \leq 1, \forall i\in [m]}\subseteq \cX\times [0,1]$.\\
    \textbf{Parameters}: Approximation parameter $\alpha\in [0,1]$.
    \begin{enumerate}[leftmargin=0.43cm]
            \item[$\triangleright$]\textcolor{DarkBlue}{Find an optimal regressor for $S$}
            \item $f^\star\leftarrow \argmin_{f\in\cF}L_p(f,S)$
            \item[$\triangleright$]\textcolor{DarkBlue}{Define a discretized dataset where the new labels are discretized to a resolution of $\alpha$}
            \item Define $S_\alpha= A \cup B$, where
            % \footnotesize
            \begin{align*}
            A&=\Bigl\{\lr{\vec x_i,j\alpha}: i\in [m], j\in \lrset{-1/\alpha,\ldots,-1,0,1,\ldots,1/\alpha}\Bigl\}
            \\
            B &= \Bigl\{\lr{\vec x_i,j(1+\alpha)}: i\in [m], j\in \lrset{-1,+1}\Bigl\}
            \end{align*}
            % \normalsize
            % $$S_\alpha=\lrset{\lr{\vec x_i,j\alpha}: i\in [m], j\in \lrset{-\frac{1}{\alpha},\ldots,-1,0,1,\ldots,\frac{1}{\alpha}}}$$
            % $$\bigcup \,
            % \lrset{\lr{\vec x_i,j(1+\alpha)}: i\in [m], j\in \lrset{-1,+1}}$$
            %
            \item[$\triangleright$]\textcolor{DarkBlue}{Label by $\pm 1$ the discretized dataset with $f^\star$ }
            \item
            Define 
            $$S_\alpha(f^\star)=\{\lr{\lr{\vec x_i,\tilde{y}},z}:\text{ for any }(\vec x_i,\tilde{y})\in S_\alpha:$$ 
            $$z=+1 \text{ if } f^\star(\vec x_i)-\tilde{y} \leq 0 \text{, otherwise } z=-1\}$$
        \end{enumerate}
    \textbf{Compression:} Run \texttt{SVM} for realizable binary classification on $S_\alpha(f^\star)$ and return a set of \emph{support vectors}. \\
    \textbf{Reconstruction:} Run \texttt{SVM} and the compression set.
\end{algorithm}

\begin{proof} 
Let $\cF$ be the set of homogeneous linear predictors bounded by $1$,
$\cF = \lrset{\vec x\mapsto \inner{\vec w, \vec x}: \vec w \in \R^d, \norm{\vec w}_2\leq 1 }$, and data a set $S=\lrset{\lr{\vec x_i,y_i}:  \norm{\vec x_i}_2 \leq 1, \forall i\in [m]}\subseteq \cX \times [0,1]$.

\emph{Approximate compression correctness.}
The algorithmic idea is as follows. We first compute in Step 1 an optimal linear regressor $f^\star\in \cF$ for the $\ell_p$ loss. In step 2, we create a discretized dataset $S_\alpha$ of size $m(2/\alpha+3)$, where for each example $\vec x_i$ we create $(2/\alpha+3)$ real-valued labels $\lrset{-1-\alpha,-1,\ldots,-2\alpha,-\alpha,0,\alpha,2\alpha,\ldots,1,1+\alpha}$. Then in step 3, we use the regressor $f^\star$
for classifying the dataset $S_\alpha$. That is, for any $(\vec x_i, \tilde{y})\in S_\alpha$, we have $(\lr{\vec x_i,\tilde{y}}, +1)$ whenever $f^\star(\vec x_i)-\tilde{y} \leq 0$, and $(\lr{\vec x_i,\tilde{y}}, -1)$ otherwise. We denote this dataset by $S_\alpha(f^\star)$. Note that for each $\vec x_i$ we created a grid of binary labels of resolution $\alpha$ in the range $[-1-\alpha,1+\alpha]$, and since $\lrabs{f^\star(\vec x_i)}\leq 1$, for each vector $\vec x_i$ 
there exists  $\Tilde{y}_1,\Tilde{y}_2$ such that $(\vec x_i,\Tilde{y}_1)$, $(\vec x_i,\Tilde{y}_2)\in  S_\alpha(f^\star)$ have different labels.
To obtain compression, we execute \texttt{Support Vector Machine(SVM)} for realizable classification on $S_\alpha(f^\star)$.
% denoting the output by $f_{\svm}=(\vec w,b)_{\svm}$ where $\vec w\in\R^{d+1},b\in \R$. 
Note that the classification problem is in $\R^{d+1}$ and the original regression problem is in $\R^d$. 
Applying Caratheodory's theorem allows us to express its output as a linear combination of $d+2$ support vectors (along with their labels). The set of returned support vectors constitutes the compression set. For reconstruction, we utilize \texttt{SVM} on these support vectors. The hyperplane returned by \texttt{SVM} can be re-interpreted as a function from $\R^d$ to $\R$ that pointwise approximates $f^\star$ on all $\vec x_i$ in $S$.
%1

We proceed to prove the correctness. Denote the output of the compression scheme by $f_{\svm}=\rho\lr{\kappa\lr{S}}=(\vec w_{\svm},b_{\svm})$, which a affine linear function in $\R^{d+1}$. This function can be re-interpreted as an affine linear function $\hat{f}:\R^d \rightarrow \R$, for any $\vec x\in \R^{d}$ we compute $y\in\R$ by solving $\inner{\vec w_{\svm}, (\vec x, y)}+b_{\svm}=0$, 
$$\hat{f}(\vec x)=y=\frac{\inner{\vec w_{\svm}^d, \vec x}+b_{\svm}}{\vec w_{\svm}(d+1)},$$
where $\vec w_{\svm}^d= \lr{\vec w_{\svm}(1),\ldots,\vec w_{\svm}(d)}$.
It holds that $\vec w_{\svm}(d+1)\neq 0$, since for any $\vec x_i$ there exists  $\Tilde{y}_1,\Tilde{y}_2$ such that $(\vec x_i,\Tilde{y}_1)$, $(\vec x_i,\Tilde{y}_2)\in  S_\alpha(f^\star)$ have different labels. If $\vec w_{\svm}(d+1)=0$ it means that the \texttt{SVM} hyperplane cannot distinguish between these two points, and thus, it makes a mistake on a realizable dataset, which is a contradiction. 
Since the output of \texttt{SVM} is a valid compression scheme for realizable binary classification, $\hat{f}$ should classify correctly all points in  
$S_\alpha(f^\star)$.
It follows that for any $\vec x_i$ in $S$,
$$\lrabs{f^\star(\vec x_i)-\hat{f}\lr{\vec x_i}}\leq \alpha,$$ 
due to the two adjacent grid points with resolution $\alpha$ lying above and below both the hyperplane of $f^\star$ and the $\hat{f}$ hyperplane.
%
% Since the output of \texttt{SVM} is a valid compression scheme for realizable binary classification, $f_{\texttt{SVM}}$ should classify correctly all points in  
% $S_\alpha(f^\star)$. If there exists some $\vec x_i$ for which $\lrabs{f^\star(\vec x_i)-\hat{f}\lr{\vec x_i}}> \alpha,$ it means that exists $\tilde{y}_3$ for 
%
Therefore, for any $(\vec x_i, y_i)\in S$
\begin{align*}
    \lrabs{|f^\star(\vec x_i)-y_i| -|\hat{f}(\vec x_i)-y_i|}
    &\overset{(i)}{\leq}
    \lrabs{f^\star(\vec x_i)-y_i -\hat{f}(\vec x_i)+y_i} 
    % && \text{triangle inequality}
    \\
    &=
    \lrabs{f^\star(\vec x_i)-\hat{f}(\vec x_i)}
    \\
    &\leq
    \alpha,
\end{align*}
where (i) follows from the triangle inequality, 
and so $\hat{f}$ is an $\alpha$-approximate sample compression scheme for the $\ell_1$ and $\ell_\infty$ losses. For $p\in (1,\infty)$, using Lipschitzness of the $\ell_p$ loss, we have 
%
% \small
% \begin{flushleft}
\begin{align*}
\lrabs{|f^\star(\vec x_i)-y_i|^p -|\hat{f}(\vec x_i)-y_i|^p}
\end{align*}
\begin{align*}
    &\leq 
    \lrabs{p\lr{|f^\star(\vec x_i)-y_i| -|\hat{f}(\vec x_i)-y_i|}}
    \\
    &=
    p\lrabs{|f^\star(\vec x_i)-y_i| -|\hat{f}(\vec x_i)-y_i|}
    \\
    &\leq
    p\alpha.
\end{align*}
% \end{flushleft}

% \normalsize
By rescaling $p\alpha$ to $\alpha$, we have an $\alpha$-approximate compression scheme for the $\ell_p$ loss.

\emph{Approximate compression size.} The \texttt{SVM} running on $S_\alpha(f^\star)$ returns a set of support vectors of size at most $d+2$, since the input is in dimension $d+1$. The $\vec x$ vectors are part of the original sample $S$. We need to keep the grid point labels of the support vectors as well, each one of them requires $\log(1/\alpha)$ bits, and each classification $\pm 1$ costs an extra bit. We get a compression of size
$
    d+2+(d+2)\log\lr{1/\alpha} +d+2
    =
    \cO\lr{d\log\lr{1/\alpha}}.
$
%
% \begin{align*}
%     d+2+(d+2)\log\lr{1/\alpha} +d+2
%     &=
%     \cO\lr{d\log\lr{1/\alpha}}.
% \end{align*}
\end{proof} 
\subsection{Exact Compression for $p\in\lrset{1,\infty}$}\label{subsec:exact-compression-l1-linfty}
% In sharp contrast with
% the $p\in(1,\infty)$ case,
In this section, 
we show that
agnostic linear regression in $\R^d$ admits
an \emph{exact} compression scheme of size $d+1$ under $\ell_1$
and $d+2$ under $\ell_\infty$.
Our instance space is $\X=\R^d$,
label space is $\Y=\R$,
and hypothesis class is $\F\subseteq\Y^\X$, consisting of all
$f_{\vec w,b}:\X\to\Y$ given by
$f_{\vec w,b}(\vec x)=\inner{\vec w,\vec x}+b$,
indexed by $\vec w\in\R^d,b\in\R$. 
Note that we allow unbounded norms for the linear functions and the data can be unbounded as well, as opposed the the results in \Cref{subsec:approximate-compression-linear}.
% We start with the $\ell_1$ case.
%
\begin{theorem}
  \label{thm:ell1} 
  % $\cF = \lrset{\vec x\mapsto \inner{\vec w, \vec x}: \vec w \in \R^d,b\in \R }$, $S=\lrset{\lr{\vec x_i,y_i}:  \forall i\in [m]}\subseteq \R^d\times \R$.
  There exists an efficiently computable (see the linear program in \cref{eq:reg-lp1-general-d}) exact compression scheme
  for agnostic $\ell_1$ linear regression of size $d+1$. 
\end{theorem}
The optimization technique based on minimizing the sum of absolute deviations is known as Least Absolute Deviations (LAD) and was introduced by Boscovich in 1757 (see, for example, \citet{dodge2008least}). We derive a compression scheme from this method. 
\begin{proof}
% [of \Cref{thm:ell1}]
We start with $d=0$. The sample then consists of $(y_1,\ldots,y_m)$
[formally: pairs $(x_i,y_i)$, where $x_i\equiv0$],
and $\F=\R$ [formally, all functions $h:0\mapsto\R$].
We define $f_S$ to be the median of $(y_1,\ldots,y_m)$,
which for odd $m$ is defined uniquely and for even $m$ can be taken
arbitrarily as the smaller of the two midpoints.
It is well-known that such a choice minimizes the empirical
$\ell_1$ risk, and it clearly constitutes a compression scheme of size $1$.

The case $d=1$ will require more work.
The sample consists of $(x_i,y_i)_{i\in[m]}$,
where $x_i,y_i\in\R$,
and $\F=\set{
  \R\ni x\mapsto wx+b: a,b\in\R
}$.
Let $(w^\star,b^\star)$ be a (possibly non-unique) minimizer of
\beqn
\label{eq:1dim-prob}
L(w,b):=\sum_{i\in[m]}|(wx_i+b)-y_i|,
\eeqn
achieving the value
$L^\star$. We claim that we can always find two indices
$\bari,\barj\in[m]$ such that the line determined by
$(x_\bari,y_\bari)$ and $(x_\barj,y_\barj)$ also achieves the optimal
empirical risk $L^\star$. 
More precisely,
the line $(\hat w,\hat b)$ induced by
$((x_\bari,y_\bari),(x_\barj,y_\barj))$
via\footnote{We ignore the degenerate possibility of vertical lines,
which reduces to the $0$-dimensional case.}
$\hat w=(y_\barj-y_\bari)/(x_\barj-x_\bari)$
and
$\hat b=y_\bari-\hat w x_\bari$,
verifies $L(\hat w,\hat b)=L^\star$.

To prove this claim, we begin by recasting (\ref{eq:1dim-prob})
as a linear program.
%
% \begin{mdframed}[userdefinedwidth=8cm,align=center]
% Linear programming for $\ell_1$ regression:
\begin{align}
  \label{eq:reg-lp1}
  \min_{(\eps_1,\ldots,\eps_m,w,b)\in\R^{m+2}}
  &\quad \sum_{i=1}^m \eps_i \quad
  \mbox{s.t.}& \\
\forall i\in[m] &\quad \eps_i\ge0 \nonumber\\
\forall i\in[m] &\quad wx_i+b-y_i \le \eps_i \nonumber\\
\forall i\in[m] &\quad -wx_i-b+y_i \le \eps_i \nonumber
.
\end{align}
% \end{mdframed}
%
We observe that the linear program in (\ref{eq:reg-lp1})
is feasible with a finite solution
(and actually, the constraints $\eps_i\ge0$ are redundant).
Furthermore, any optimal
value is achievable at one of the extreme points of the constraint-set
polytope $\mathcal{P}\subset\R^{m+2}$.
Next, we claim that
the
extreme points of
the polytope
$\mathcal{P}$
are all of the form
$v\in\mathcal{P}
$
with
two
(or more)
of the $\eps_i$s equal to $0$.
This
suffices to prove our main claim,
since $\eps_i=0$ in
$v\in
\mathcal{P}
$ iff the
$(w,b)$ induced by $v$ verifies $wx_i+b=y_i$;
in other words, the line induced by $(w,b)$
contains the point $(x_i,y_i)$.
If a line contains two data points,
it is uniquely determined by them:
these constitute a compression set of size $2$.
(See illustration in Figure~\ref{fig:plots}.)

\newcommand\TikCircle[1][2.5]{\tikz[baseline=-#1]{\draw[thin,blue,fill=blue](0,0)circle[radius=#1pt];}}

\begin{figure}
 \includegraphics[width=0.5\textwidth]{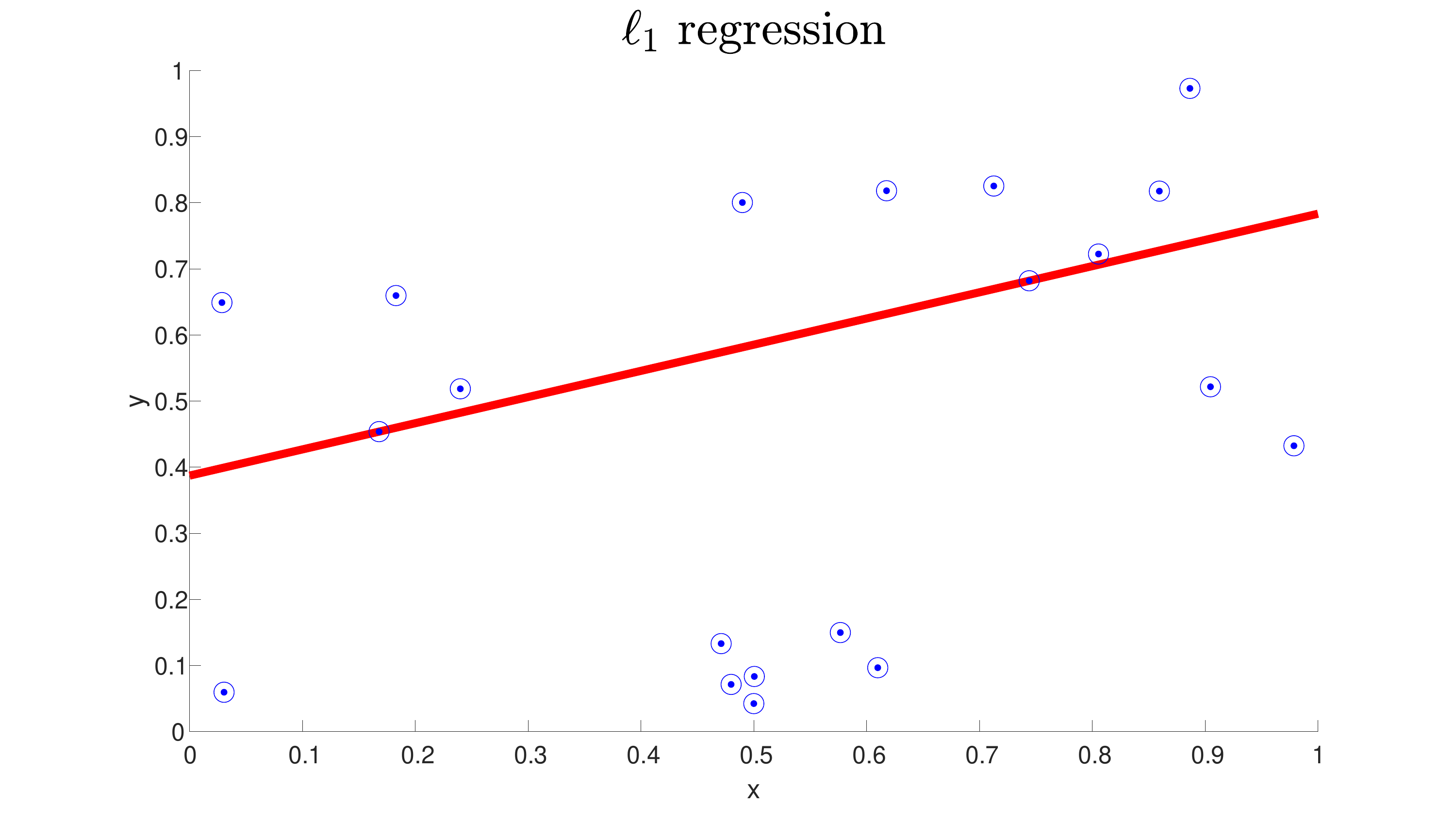}
  \includegraphics[width=0.5\textwidth]{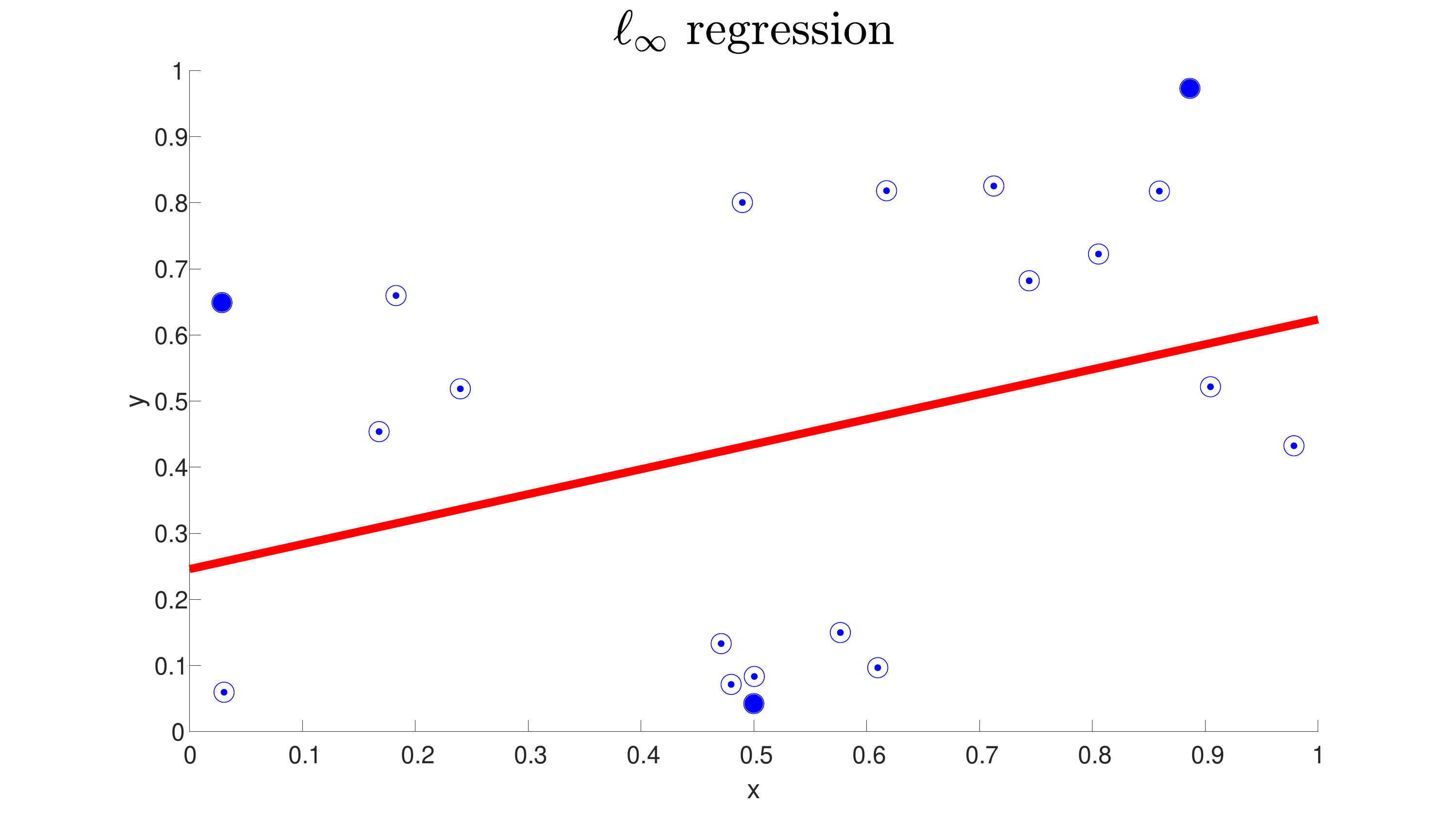}    \\
  \caption{
    A sample $S$ of $m=20$ points $(x_i,y_i)$
    was drawn iid uniformly from $[0,1]^2$.
    On this sample, $\ell_1$ regression was performed by solving
    the LP in (\ref{eq:reg-lp1}), shown on the left,
    and $\ell_\infty$ regression was performed by solving the LP in
    (\ref{eq:reg-lpinf}), on the right.
    In each case, the regressor provided by the LP solver
    is indicated by the thick (red) line. Notice that for $\ell_1$, the line
    contains exactly $2$ datapoints. For $\ell_\infty$, the regressor contains
    no datapoints; rather, the $d+2=3$ ``support vectors'' are indicated by~$
\TikCircle
    $.
}
  \label{fig:plots}
\end{figure}  

Now we prove our claimed
property
of the extreme points.
First, we claim that any
extreme point of $\mathcal{P}$
must have at least one
$\eps_i$ equal to $0$.
Indeed, let $(w,b)$ define a line.
Define
$$b^+ := \min\set{\tilde b\in[b,\infty): \exists i\in[m], wx_i +
    \tilde b
    = y_i)}$$
and analogously,
$$b^- := \max\set{\tilde b\in(-\infty,b] : \exists i\in[m], wx_i +
\tilde b = y_i)}.$$
In words,
$(w, b^+)$ is the line obtained by
increasing $b$ to a maximum value of $b^+$,
where the line $(w,b^+)$ touches a datapoint,
and likewise,
$(w, b^-)$ is the line obtained by
decreasing $b$ to a minimum value of $b^-$,
where the line $(w,b^-)$ touches a datapoint.

Define by $S_{a,b}^+ := \set{i : \abs{wx_i+b < y_i}}$
the points above the line defined by $(w,b)$
and $S_{a,b}^- := \set{i : \abs{wx_i+b > y_i}}$
the points below the line defined by $(w,b)$.
For a line $(w,b)$ which does not contain a data point
we can rewrite the sample loss as
\beq
L(w, b) &=&
\sum_{i\in S_{a,b}^+} (y_i - (wx_i+b)) + \sum_{i\in S_{a,b}^-} ((wx_i+b) - y_i)
\\ &=&
\paren{\sum_{i\in S_{a,b}^-} x_i - \sum_{i\in S_{a,b}^+} x_i}a
+
\paren{|S_{a,b}^-| - |S_{a,b}^+|}b
+
\paren{\sum_{i\in S_{a,b}^+} y_i - \sum_{i\in S_{a,b}^-} y_i}
\\ &=:&
\lambda a+\mu b+\nu
.
\eeq

Since
for fixed $a$ and $b\in[b^-,b^+]$,
the quantities
$S_{a,b}^-, S_{a,b}^+$ are constant,
it follows that
the function $L(w,\cdot)$ is
affine in $b$, and hence minimized at
$b^{\pm}\in\set{b^-,b^+}$.
Thus,
there is no loss of generality in taking $b^\star=b^\pm$,
which implies 
that the optimal solution's line
$(w^\star,b^\star)$
contains a data point $(x_\bari,y_\bari)$.
If the line $(w^\star,b^\pm)$ contains other data points
then we are done, so assume
to the contrary
that $\eps_\bari$ is the only
$\eps_i$ that vanishes in the corresponding solution
$v^\star\in\mathcal{P}$.

Let $\mathcal{P}_\bari\subset\mathcal{P}$
consist of all $v$ for which $\eps_\bari=0$,
corresponding to all feasible solutions whose line contains the
data point $(x_\bari,y_\bari)$.
Let us say that two lines $(w_1,b_1),(w_2,b_2) $ are {\em equivalent} if they induce
the same partition on the data points, in the sense of linear
separation in the plane.
The formal condition is
$S_{w_1,b_1}^- = S_{w_1,b_1}^-$,
which is equivalent to
$S_{w_1,b_1}^+ = S_{w_1,b_1}^+$.

Define
$
\mathcal{P}_\bari^\star
\subset\mathcal{P}_\bari$
to consist of those feasible solutions whose line is equivalent
to $(w^\star,b^\pm)$.
Denote by
$w^+ := \max\set{a : (\varepsilon_1,..,\varepsilon_m,w,b)\in \mathcal{P}_\bari^\star}$
and define
$v^+$ to be a feasible solution in $\mathcal{P}_\bari^\star$
with slope $w^+$,
and analogously,
$w^- := \min\set{w : (\varepsilon_1,..,\varepsilon_m,w,b)\in \mathcal{P}_\bari^\star} $
and $v^-\in\mathcal{P}_\bari^\star$
with slope $w^-$.
Geometrically this corresponds
to
rotating
the line $(w^\star,b^\star)$
about the point $(x_\bari,y_\bari)$ until it encounters
a data point above
and below.

Writing, as above, the sample loss in the form
$L(w,b)$, we see that $L(\cdot,b^\pm)$
is affine in $a$ over the range $w\in[w^-,w^+]$
and hence is minimized at one of the endpoints.
% \hide{
% Again if we look at $L(w,b)$ for $w\in \mathcal{P}_\bari^\star$
% and for simplicity we denote
% $r := (|S_{a,b}^-| - |S_{a,b}^+|)$,
% $s := \sum_{i\in S_{a,b}^+} y_i - \sum_{i\in S_{a,b}^-} y_i$
% and
% $t := \sum_{i\in S_{a,b}^-} x_i - \sum_{i\in S_{a,b}^+} x_i$
% we get
% $L(w,b) = br + at + s$
% Since for every $w\in \mathcal{P}_\bari^\star$ 
% $ax_\bari + b = y_\bari$
% we can also denote
% $t' = t - rx_\bari$ and
% $s' = s + ry_\bari$
% and rewrite
% \[L(w,b) = at' + s'\]
% so it is clear that for
% $w\in \mathcal{P}_\bari^\star$
% and for a fixed $b$ the score
% $L(\cdot,b)$
% is linear in $a$
% and hence maximized by either
% $w^-$ or $w^+$.
% So we can get that there exist a optimal feasible solution
% with two data points on it, so compression claim follows.
% }
This furnishes another datapoint $(x_\barj,y_\barj)$
verifying $\hat w x_\barj+\hat b=y_\barj$
for $L(\hat w,\hat b)=L^\star$,
and hence proves compressibility
into two points for $d=1$.

Generalizing to $d>1$ is quite straightforward.
We define
\beq
L(\vec w,b)=\sum_{i\in[m]}|(\inner{\vec w,\vec x_i}+b)-y_i|
\eeq
and express it as a linear program
analogous to 
(\ref{eq:reg-lp1}),
\begin{mdframed}[userdefinedwidth=8cm,align=center]
\textbf{Linear programming for $\ell_1$ regression}:
\begin{align}
  \label{eq:reg-lp1-general-d}
  \min_{(\eps_1,\ldots,\eps_m,\vec w,b)\in\R^{m+d+1}}
  &\quad \sum_{i=1}^m \eps_i \quad
  \mbox{s.t.}& \\
\forall i\in[m] &\quad \eps_i\ge0 \nonumber\\
\forall i\in[m] &\quad \inner{\vec w,\vec x_i}+b-y_i \le \eps_i \nonumber\\
\forall i\in[m] &\quad -\inner{\vec w,\vec x_i}-b+y_i \le \eps_i \nonumber
.
\end{align}
\end{mdframed}
% where the minimization is over
% $(\eps_1,\ldots,$ $\eps_m,\vec w,b)\in\R^{m+d+1}$
% and the expression $wx_i$ in the constraints
% is replaced by $\inner{\vec w,\vec x_i}$.
Given an optimal solution $(\vec w^\star,b^\star)$,
we argue exactly as above that $b^\star$
may be chosen so that the optimal regressor contains
some datapoint --- say, $(\vec x_1,y_1)$.
Holding $b^\star$ and $\vec w(j)$, $j\neq 1$
fixed, we argue, as above, that $\vec w(1)$
may be chosen so that the optimal regressor
contains another datapoint (say, $(\vec x_2,y_2)$).
Proceeding in this fashion, we inductively
argue that the optimal regressor may be
chosen to contain some $d+1$ datapoints,
which provides the requisite compression scheme.
\end{proof}
Similarly, we can obtain a compression scheme for $\ell_\infty$ loss via linear programming.
\begin{theorem}
  \label{thm:ellinfty}
  There exists an efficiently computable (see the linear program in \cref{eq:reg-lpinf}) exact compression scheme
  for agnostic $\ell_\infty$ linear regression of size $d+2$.
\end{theorem}
\begin{proof}
% [Of \Cref{thm:ellinfty}]
Given $m$ labeled points in $\R^d \times \R$,
$S = \lrset{(\vec x_i,y_i):i
\in [m]}$
and any $\vec w \in \R^d$, $b \in \R$ define the empirical risk
\beq
L(\vec w,b) &:=&
\max
\set{|\inner{\vec w,\vec x_i} + b - y_i|:i\in[m]}
.
\eeq
We cast the risk minimization problem
as a linear program.
\begin{mdframed}[userdefinedwidth=8cm,align=center]
\textbf{Linear programming for $\ell_\infty$ regression}:
% Input: $S=\lrset{x_i,y_i}^m_{i=1}$
\begin{align}
  \label{eq:reg-lpinf}
  \min_{(\eps, \vec w, b)\in\R^{d+2}} : & \quad \eps \\
  s.t.\quad  \forall i : & \quad \eps - \inner{\vec w,\vec x_i} - b + y_i \geq 0 \nonumber\\
  & \quad \eps + \inner{\vec w,\vec x_i} + b - y_i \geq 0 \nonumber
  .
\end{align}
\end{mdframed}

% \hide{
% or more explicit

% \begin{align*}
%   \max_{\eps, a, b} : & \quad \eps \\
%   s.t.\quad  \forall i :
%                              & \quad \eps - \sum_{j=1}^da(j)x_i(j) - b + y_i \geq 0 \\
%                              & \quad \eps + \sum_{j=1}^da(j)x_i(j) + b - y_i \geq 0 \\
% \end{align*}
% }
(As before, the constraint $\eps\ge0$ is implicit in the other constraints.)
Introducing the Lagrange multipliers $\lambda_i,\mu_i\ge 0$, $i\in[m]$,
we cast
the optimization problem
in
the
form
of a
Lagrangian:
\beq
  \lagrangian(\eps, \vec w, b, \mu_1\ldots,\mu_m,\lambda_1\ldots,\lambda_m)
  &=&
  \eps
     - \sum_{i=1}^m \lambda_i \left(\eps - \inner{\vec w,\vec x_i} - b + y_i \right)
     - \sum_{i=1}^m \mu_i \left(\eps + \inner{\vec w,\vec x_i} + b - y_i \right)
  %    \hide{
  % =
  % \eps
  %    - \sum_{i=1}^m \gamma_i \left(\eps - \sum_{j=1}^da(j)x_i(j) - b + y_i \right) \\
  %    - \sum_{i=1}^m \gamma'_i \left(\eps + \sum_{j=1}^da(j)x_i(j) + b - y_i \right)
  %    }
     .
\eeq

The KKT conditions imply, in particular, that
% \hide{
% Calculate the partial derivatives and by the first two KKTn
% conditions make them equal to zero
% \[\pd{\eps} = 0 \Rightarrow 1   - \sum_{i=1}^m \gamma_i - \sum_{i=1}^m \gamma'_i = 0\]
% \[\pd{a(j)} = 0 \Rightarrow \sum_{i=1}^m \gamma_i x_i(j)= \sum_{i=1}^m \gamma'_i x_i(j)\]
% \[\pd{b} = 0 \Rightarrow  \sum_{i=1}^m \gamma_i = \sum_{i=1}^m \gamma'_i\]
% And combining the first and the third
% \[\sum_{i=1}^m \gamma_i = \sum_{i=1}^m \gamma'_i = \uf{2}\]
% The third, and most important for our needs, KKT condition boils down to
% }
\begin{align*}
  \forall i: \quad & \lambda_i(\eps - \inner{\vec w,\vec x_i} - b + y_i) = 0 \\
  & \mu_i(\eps + \inner{\vec w,\vec x_i} + b - y_i) = 0
  .
\end{align*}

Geometrically, this means that either the constraints corresponding to
the $i$th datapoint are inactive --- in which case, omitting the datapoint
does not affect the solution --- or otherwise,
the $i$th datapoint induces the active constraint
\beqn
\inner{\vec w,\vec x_i} + b - y_i
=
\eps
.
\label{eq:svm}
\eeqn
On analogy with SVM, let us refer to the datapoints satisfying
(\ref{eq:svm})
as the {\em support vectors}; clearly,
the remaining sample points may be
discarded without affecting the solution.
Solutions to (\ref{eq:reg-lpinf}) lie in $\R^{d+2}$
and hence $d+2$ linearly independent datapoints
suffice to uniquely pin down an optimal $(\eps,\vec w,b)$
via the equations (\ref{eq:svm}).

\end{proof}
\subsection{Exact Constant Size Compression Is Impossible for $p\in (1,\infty)$}
\label{subsec:impossible-exact-compression}
We proceed to show that it is impossible to have an \emph{exact} compression scheme of constant size (independent of the sample size) for $p\in (1,\infty)$, generalizing the result for the $\ell_2$ loss by \citet[Theorem 4.1]{david2016supervised}.
\begin{theorem}[\citet{david2016supervised}]
  \label{thm:david-ell2}
  There is no exact agnostic sample compression scheme for zero-dimensional linear
regression with size $k(m)\le m/2$.
\end{theorem}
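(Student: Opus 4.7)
The plan is to reduce the statement to an impossibility of recovering the sample mean. Since $\X=\R^0$ is a singleton, the hypothesis class of affine functions collapses to the constants, and any reconstructed function is itself a constant $b\in\R$. For squared-error loss, the map $b\mapsto \frac{1}{m}\sum_i(b-y_i)^2$ is strictly convex with unique minimizer $b=\bar y(S):=\frac{1}{m}\sum_i y_i$, so the $\F$-competitive empirical-loss requirement forces $\rho(\kappa(S))=\bar y(S)$ for every $S\in\R^m$. I therefore need to show that no scheme $(\kappa,\rho)$ with $k(m)\le m/2$ can realize the mean functional pointwise on $\R^m$.

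Next, I would partition $\R^m$ by the discrete part of the selection. For each $S$, $\kappa(S)$ specifies an index subset $I(S)\subseteq[m]$ with $|I(S)|\le k'$, a bit string $b(S)\in\{0,1\}^{\le k''}$, and the chosen labels $y_{I(S)}$; the constraint $k'+k''\le k\le m/2$ leaves only finitely many possible discrete outputs $(I,b)$. Fix one such pair and let $A_{I,b}=\{S:I(S)=I,\,b(S)=b\}$. On $A_{I,b}$ the reconstruction depends only on $(y_i)_{i\in I}$, so there exists $\phi_{I,b}:\R^{|I|}\to\R$ with $\rho(\kappa(S))=\phi_{I,b}((y_i)_{i\in I})$. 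Combined with $\rho(\kappa(S))=\bar y(S)$, this forces
\begin{equation*}
\sum_{j\notin I} y_j \;=\; m\,\phi_{I,b}((y_i)_{i\in I}) - \sum_{i\in I} y_i \;=:\; g_{I,b}((y_i)_{i\in I})
\end{equation*}
on $A_{I,b}$, so $A_{I,b}$ is contained in the graph-set $G_{I,b}:=\{S:\sum_{j\notin I} y_j=g_{I,b}((y_i)_{i\in I})\}$.

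I finish with a dimension/measure argument. Since $|[m]\setminus I|\ge m-k\ge m/2\ge 1$, the slice of $G_{I,b}$ for each fixed $y_I$ is a proper affine hyperplane in $\R^{|[m]\setminus I|}$, hence of Lebesgue measure zero, and by Fubini $G_{I,b}$ has $m$-dimensional Lebesgue measure zero. The union over the finite collection of $(I,b)$ is still measure zero, yet by construction it covers every $A_{I,b}$ and thus all of $\R^m$, contradicting $\mathrm{Leb}(\R^m)=\infty$; hence some $S^*$ must violate $\rho(\kappa(S^*))=\bar y(S^*)$.

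The main technical obstacle is measurability: since $\kappa$ is an arbitrary function, the fibers $A_{I,b}$ and the functions $g_{I,b}$ need not be Borel, so Fubini must be invoked with care. I would handle this either by switching to outer Lebesgue measure (which remains countably subadditive and respects the product structure enough for this argument), or equivalently by running the argument against a nonatomic product probability measure on $\R^m$ and bounding the outer probability of each $G_{I,b}$ via the conditional distribution of $\sum_{j\notin I} Y_j$ given $Y_I$; either route produces the same contradiction. Everything else is essentially bookkeeping.
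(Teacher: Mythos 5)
The paper itself does not prove this statement --- it is quoted from \citet{david2016supervised}; the nearest in-house argument is the counting proof of Theorem~\ref{thm:our-lb}, which only reaches $k<\log m$. Judged on its own terms, your reduction is sound up to the last step: strict convexity does force $\rho(\kappa(S))=\bar y(S)$ exactly, and on each cell $A_{I,b}$ the output is indeed a function of $(y_i)_{i\in I}$ alone, so $A_{I,b}\subseteq G_{I,b}$. The genuine gap is the concluding measure argument, and neither of your proposed patches closes it. Fubini requires measurability, and outer Lebesgue measure does \emph{not} satisfy a Fubini-type theorem: it is a standard ZFC construction (transfinite recursion over the closed planar sets of positive measure) that the graph of a suitably pathological function $g:\R^{|I|}\to\R$ has \emph{full} outer measure even though every section over $y_I$ is a single point. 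Since $g_{I,b}$ is manufactured from the completely arbitrary maps $\kappa$ and $\rho$, you cannot exclude this, so $\lambda^*(G_{I,b})$ may be infinite and countable subadditivity of $\lambda^*$ buys you nothing; the conditional-distribution variant founders on exactly the same non-measurable event. (Relatedly, Sierpi\'{n}ski's CH decomposition of $\R^2$ into two sets with countable sections shows that ``finitely many sets with null sections cannot cover $\R^m$'' is not a theorem one can wave at.) As written, the argument does not produce a single bad sample $S^*$.

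The fix is to discretize rather than to integrate, which also brings the proof back in line with the paper's own pigeonhole style. Restrict to samples with $y_i\in V_i:=\{0,1,\dots,n-1\}\cdot n^{i-1}$ for a suitable $n>2m$, so that for every $J\subseteq[m]$ the map $(y_j)_{j\in J}\mapsto\sum_{j\in J}y_j$ is injective on $\prod_{j\in J}V_j$. Your identity $\sum_{j\notin I}y_j=m\,\phi_{I,b}((y_i)_{i\in I})-\sum_{i\in I}y_i$ then shows that for each fixed $(y_i)_{i\in I}$ at most one completion lies in $A_{I,b}$, hence $|A_{I,b}|\le n^{|I|}\le n^{k'}$. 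Summing over the at most $\sum_{\ell\le k'}m^{\ell}$ ordered index tuples and $2^{k''+1}$ bit strings gives $\sum_{I,b}|A_{I,b}|\le 2(k'+1)(mn)^{k'}2^{k''}\le 2(m+1)(2mn)^{m/2}<n^m$ once $n$ exceeds a constant multiple of $m$, contradicting the fact that the cells must cover all $n^m$ samples. This finite counting over a super-increasing grid is what upgrades the paper's $\log m$ bound to $m/2$, and it is the step your measure-theoretic route cannot supply.
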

%
% We generalize this result for $p\in(1,\infty)$.
%
\begin{theorem}
  \label{thm:our-lb}
  There is no exact agnostic sample compression scheme for zero-dimensional
  linear regression
under $\ell_p$ loss, $1<p<\infty$,
with size $k(m) < \log(m)$.
\end{theorem}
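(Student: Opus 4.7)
The plan is to establish the lower bound by a counting/information-theoretic argument that hinges on strict convexity of $|\cdot|^p$ for $1<p<\infty$. The intuition is that for $p$ in this range, the unique empirical-loss minimizer on a binary-labeled sample depends on the exact proportion of $1$'s, yet any bounded-size compression scheme restricted to binary-labeled samples has only finitely many possible reconstructions; by exhibiting many samples with distinct optima one drives a contradiction. The argument has three parts: construct a large family of samples with pairwise distinct $\ell_p$ optima, argue that any valid compression scheme is forced to hit each optimum exactly, and then bound the number of reconstructions the scheme can produce on this family.

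First, for each $t\in\{0,1,\ldots,m\}$, let $S_t$ be the zero-dimensional sample consisting of $t$ copies of the label $1$ and $m-t$ copies of the label $0$. The empirical loss $L_p(b,S_t)=(1/m)[t|b-1|^p+(m-t)|b|^p]$ is a positive linear combination of the strictly convex functions $|b-1|^p$ and $|b|^p$, hence itself strictly convex with a unique minimizer $b_t^*$; solving the first-order condition $(m-t)b^{p-1}=t(1-b)^{p-1}$ on $(0,1)$ gives $b_t^*=(t/(m-t))^{1/(p-1)}/(1+(t/(m-t))^{1/(p-1)})$ for $0<t<m$, with boundary values $b_0^*=0$ and $b_m^*=1$. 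The map $t\mapsto b_t^*$ is strictly increasing, so the $m+1$ minimizers are pairwise distinct. By strict convexity the competitive-loss inequality $L_p(f_{S_t},S_t)\le\inf_f L_p(f,S_t)$ is in fact an equality, forcing $f_{S_t}\equiv b_t^*$; hence any valid scheme must produce $m+1$ pairwise distinct real outputs on the family $\{S_0,\ldots,S_m\}$.

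Second, I would count the inputs to $\rho$ that can arise from this family. Since every label in every $S_t$ lies in $\{0,1\}$ and the instance space is a single point, a compression set $\kappa(S_t)$ lies in $\bigcup_{\ell\le k'}\{0,1\}^\ell\times\bigcup_{\ell\le k''}\{0,1\}^\ell$ with $k'+k''\le k(m)$, a set of cardinality at most $2^{k(m)+2}$. Since $\rho$ is a fixed function, the image of $\rho$ on these inputs has at most $2^{k(m)+2}$ elements. Combining the two counts forces $m+1\le 2^{k(m)+2}$, i.e., $k(m)\ge \log_2(m+1)-2$, which (after absorbing the additive constant into the base of the logarithm) contradicts $k(m)<\log m$ for all sufficiently large $m$.

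The main obstacle is the bookkeeping in the counting step: one has to keep the union over $(k',k'')$ with $k'+k''\le k(m)$ tight enough that the additive constant remains independent of $m$. A secondary subtlety is that the argument relies crucially on restricting to $\{0,1\}$-labeled samples, since this is what forces the reconstruction-function input space to be small; were labels allowed to be arbitrary reals, the input space of $\rho$ would be uncountable and the count would collapse. Finally, strict convexity of $|\cdot|^p$ fails precisely at $p=1$ and $p=\infty$, which is exactly the boundary at which the positive results cited earlier take over, making the range $1<p<\infty$ in the theorem sharp.
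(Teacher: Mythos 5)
Your proposal is correct and follows essentially the same route as the paper's own proof: restrict to $\{0,1\}$-labeled zero-dimensional samples, use strict convexity of $|\cdot|^p$ to show the unique minimizer $\hat s = \mu^{1/(p-1)}/(1+\mu^{1/(p-1)})$ takes $m+1$ pairwise distinct values as the count of ones varies, and then pigeonhole against the $O(2^k)$ possible outputs of a $k$-selection scheme. The only (shared) blemish is the additive/multiplicative slack in the final counting step --- your $m+1\le 2^{k(m)+2}$ and the paper's $2^{k+1}-k<m$ both only contradict $k(m)<\log m$ up to a small constant --- which you correctly flag and which does not affect the substance of the argument.
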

\begin{proof}
% [Of \Cref{thm:our-lb}]
  Consider a sample
  $(y_1,\ldots,y_m)\in\set{0,1}^m$.
  Partition the indices $i\in[m]$
  into
  $S_0 := \{i\in[m]: y_i = 0\}$
  and
  $S_1 := \{i\in[m]: y_i = 1\}$.
  The empirical risk minimizer is given by
  \beq
  \hat r:=\argmin_{s\in\R}\sum_{i=1}^m |y_i - s|^p.
  \eeq

  To obtain an explicit expression for $\hat r$, define
  \beq
  F(s) = \sum_{i=1}^m |y_i - s|^p
  = |S_1|(1-s)^p + |S_0|s^p
  =:N_1(1-s)^p + N_0s^p.
  \eeq
  
% \hide{  
% The target function is 
% \[F = \argmin_{w\in\R} \left(\sum_{i=1}^m |x_i - a|^p \right)^{1/p}
%   = \argmin_{w\in\R} \sum_{i=1}^m |x_i - a|^p\]
% So we define
% \[F(a) = \sum_{i=1}^m |x_i - a|^p
% = |S_1(a)|(1-a)^p + |S_0(a)|a^p\]
% Make the derivative equal to zero
% \[0 = F'(a) 
% = -|S_1(a)|p(1-a)^{p-1} + |S_0(a)|pa^p{p-1}\]
% and get that
% \[|S_1(a)|(1-a)^{p-1} = |S_0(a)|a^{p-1}\]
% }

We then compute
\beq
F'(s)=pN_0s^{p-1}-pN_1(1-s)^{p-1}
\eeq
and find that $F'(s)=0$ occurs at
\beq
\hat s = \frac{\mu^{1/(p-1)}}{1+\mu^{1/(p-1})},
\eeq
where $\mu=N_1/N_0$.
A straightforward analysis of the second derivative
shows that $\hat{s}=\hat r$ is indeed the unique minimizer of $F$.

Thus, given a sample of size $m$, the unique minimizer $\hat r$ 
is uniquely
determined by $N_0$ --- which can take on any of integer $m+1$ values
between $0$ and $m$.
On
the other hand,
every
output of a $k$-selection function $\kappa$
outputs a multiset $\hat S\subseteq S$ of size $k'$
and a binary string of length $k''=k-k'$.
Thus, the total number of values representable by a $k$-selection scheme
is at most
\beq
\sum_{k'=0}^k {k'}2^{k-k'}<2^{k+1}-k,
\eeq
which, for $k<\log m$, is less than $m$.
% \hide{
% compressed output (and thus a reconstruction input) is a $k(m)$ multiset and a $k$ binary string, so there are $k(m)2^{k(m)}$ possible outputs to the reconstruction function. 
% By a simple pigeon-hole principle it holds that the compression-sizee must be
% $k(m) > \log(m)$
% }
\end{proof}

\begin{remark}
  \label{rem:lb-Omega(m)}
  A more refined analysis, along the lines of
  \citet[Theorem 4.1]{david2016supervised}, should
  yield a lower bound of $k=\Omega(m)$. A technical complication
  is that unlike the $p=2$ case, whose empirical risk minimizer
  has a simple explicit form, the general $\ell_p$ loss
  does not admit a closed-form solution and uniqueness must be argued
  from general convexity principles. 
  % We leave this for the extended version.
\end{remark}
%
%%%%%%%%%%%%%%%%%%%%%%%%%%%%%%%%%%%%%%%%%%%%%%%%%
\section{Open Problems}\label{sec:open-problems}
The positive result for $\ell_1$ loss may also lead us to wonder 
how general of a result might be possible.  In particular, noting that 
the pseudo-dimension \citep{pollard84,MR1089429,anthony1999neural} of linear functions in $\R^d$ 
is precisely $d+1$ \citep{anthony1999neural}, 
there is an intriguing possibility for the following generalization.
For any class $\mathcal{F}$ of real-valued functions, denote by $\pseudo\lr{\cF}$ 
the pseudo-dimension of $\mathcal{F}$.

\paragraph{Open Problem: Compressing to pseudo-dimension Number of Points.}
Under the $\ell_1$ loss, 
does every class $\mathcal{F}$ of real-valued functions 
admit an \emph{exact} agnostic compression scheme of size $\pseudo\lr{\cF}$?

It is also interesting, and perhaps more approachable as an initial aim, 
to ask whether there is an agnostic compression scheme of size 
at most \emph{proportional to} $\pseudo\lr{\cF}$.  Even falling short of this, 
one can ask the more-basic question of whether classes with $\pseudo\lr{\cF} < \infty$
always have \emph{bounded} agnostic compression schemes (i.e., independent of sample size $m$), 
and more specifically whether the bound is expressible purely as a function of $\pseudo\lr{\cF}$
(\citet{moran2016sample} have shown this is always possible in the realizable classification setting).

These questions are directly related to (and inspired by) the well-known 
long-standing conjecture of 
\citet{DBLP:journals/ml/FloydW95,warmuth2003compressing}, which 
asks whether, for realizable-case binary classification, 
there is always a compression scheme of size at most linear in the VC dimension
of the concept class.
Indeed, it is clear that a positive solution of 
our open problem above would imply a positive solution to the 
original sample compression conjecture, since in the realizable case 
with a function class $\mathcal{F}$ of $\{0,1\}$-valued functions, 
the minimal empirical $\ell_1$ loss on the data is zero, and any function obtaining zero 
empirical $\ell_1$ loss on a data set labeled with $\{0,1\}$ values 
must be $\{0,1\}$-valued on that data set, and thus can be thought of as a 
sample-consistent classifier.\footnote{To make such a function actually binary-valued everywhere, 
it suffices to threshold at $1/2$.}  Noting that, for $\mathcal{F}$ containing 
$\{0,1\}$-valued functions, $\pseudo\lr{\cF}$ is equal the VC dimension, the implication is clear. 

The converse of this direct relation is not necessarily true.
Specifically, for a set $\mathcal{F}$ of real-valued functions, 
consider the set $\mathcal{H}$ of subgraph sets: $h_{f}(x,y) = \mathbb{I}[ y \leq f(x) ]$, $f \in \mathcal{F}$.
In particular, note that the VC dimension of $\mathcal{H}$ is precisely $\pseudo\lr{\cF}$.
It is \emph{not} true that any realizable classification compression scheme for $\mathcal{H}$ 
is also an agnostic compression scheme for $\mathcal{F}$ under $\ell_1$ loss.
Nevertheless, this reduction-to-classification approach seems intuitively appealing, 
and it might possibly be the case that there is some way to \emph{modify} certain types 
of compression schemes for $\mathcal{H}$ to convert them into agnostic compression schemes for $\mathcal{F}$.
Following up on this line of investigation seems the natural next step 
toward resolving the above general open question.

Similarly, we ask the analogous question for the $\ell_2$ loss and approximate sample compression schemes.

\paragraph{Open Problem: Compressing to fat-shattering Number of Points.}
Let $c>0$ be an absolute constant.
Under the $\ell_2$ loss, 
does every class $\mathcal{F}$ of real-valued functions 
admit an $\alpha$-\emph{approximate} agnostic compression scheme of size $\fat\lr{\cF,c\alpha}\cdot\polylog\lr{c/\alpha}$?

\newpage
\bibliography{refs}
\appendix

\section{Auxiliary Proofs for \Cref{sec:compression-general-classes}}\label{app:auxiliary-compression-general-classes}

Our proof relies on several auxiliary results. 
\paragraph{Existence of approximate weak learners.}
We start with a result about generalization from interpolation. \citet{anthony2000function} established such a result for interpolation models (\citet[Section 21.4]{anthony1999neural}), where the cut-off parameter $\eta>0$ is fixed. The following results extend to cut-offs that may differ for different points. 
A similar result appeared in \citet{attias2023adversarially} in the context of adversarially robust learning.
%
% \textcolor{blue}{compare to the def at Bartlett's, their the labeling is deterministic - the function t(), does it matter? check proof and write a footnote. Consider writing everything with a deterministic function, is there a problem?}
%
\begin{theorem}
[Generalization from approximate interpolation with changing cutoffs]
\label{thm:generalization-interpolation-cutoffs}
Let $\cF\subseteq [0,1]^\cX$ be a function class with a finite fat-shattering dimension (at any scale). For any $\alpha,\epsilon,\delta\in(0,1)$, any function $\psi:\cX\times \cY \rightarrow [0,1]$, any distribution $P$ over $\cX\times\cY$, for a random sample $S\sim P^m$, if 
$$
m=\cO\lr{\frac{1}{\epsilon}\lr{\fat\lr{\cF,\alpha/8}\log^2\lr{\frac{\fat\lr{\cF,\alpha/8}}{\alpha\epsilon}}+\log\frac{1}{\delta}}},
$$ 
then with probability at least $1-\delta$ over $S$, for any ${f}\in\cF$ satisfying $\lrabs{{f}(x)-y}\leq \psi(x,y)+\alpha$, $\forall{(x,y)\in S}$,
it holds that $\hP_{(x,y)\sim P}\lrset{(x,y): \lrabs{f(x)-y} \leq \psi(x,y)+2\alpha}\geq 1- \epsilon$.
\end{theorem}

\begin{theorem}[\textbf{Generalization from approximate interpolation}]\citep[Theorems 21.13 and 21.14]{anthony1999neural}\label{thm:generalization-interpolation}
%
% \footnote{The statement in \citet{anthony2000function} and \citet{anthony1999neural}, Section 21.4, addressing the deterministic scenario, that is, the label of a point can be uniquely determined by some measurable function $t: \cX \rightarrow \cY$ (with probability one). The proof follows similarly for the stochastic scenario.}. 
%
Let $\cF\subseteq [0,1]^\cX$ be a function class with a finite fat-shattering dimension (at any scale). For any $\eta,\alpha,\epsilon,\delta\in(0,1)$, any distribution $\cD$ over $\cX$, any function $t:\cX \rightarrow [0,1]$, for a random sample $S\sim \cD^m$, if 
$$
m(\eta, \alpha,\epsilon,\delta)=\cO\lr{\frac{1}{\epsilon}\lr{\fat\lr{\cF,\alpha/8}\log^2\lr{\frac{\fat\lr{\cF,\alpha/8}}{\alpha\epsilon}}+\log\frac{1}{\delta}}},
$$ 
then with probability at least $1-\delta$ over $S$, for any ${f}\in\cF$ satisfying $\lrabs{{f}(x)-t(x)}\leq \eta$ $\forall{(x,y)\in S}$,
it holds that $\hP_{x\sim\cD}\lrset{x: \lrabs{f(x)-t(x)} \leq \eta+\alpha}\geq 1-\epsilon$.
\end{theorem}

\begin{proof}[of \cref{thm:generalization-interpolation-cutoffs}]
 Let $\cF\subseteq [0,1]^{\cX}$ and let 
$$
\cH=\lrset{(x,y)\mapsto\lrabs{f(x)-y}:f\in\cF}.
$$
Define the function classes 
$$\cF_1= \lrset{(x,y)\mapsto \lrabs{f(x)-y}-\psi(x,y):f\in \cF},$$
and 
$$\cF_2= \lrset{(x,y)\mapsto \max\lrset{f(x,y),0}:f \in \cF_1}.$$

We claim that $\fat(\cH,\gamma)=\fat(\cF_1,\gamma)$. Take a set $S=\lrset{(x_1,y_1),\ldots,(x_m,y_m)}$ that is $\gamma$-shattered by $\cH$. 
There exists a witness $r = (r_1, \ldots,r_m) \in [0,1]^m$ such that for each $\sigma = (\sigma_1, \ldots, \sigma_m) \in \{-1, 1\}^m$ there is a function $h_{\sigma} \in \cH$ such that
\[
\forall i \in [m] \; 
\begin{cases}
	h_{\sigma}((x_i,y_i)) \geq r_i + \gamma, & \text{if $\sigma_i = 1$}\\
    h_{\sigma}((x_i,y_i)) \leq r_i - \gamma, & \text{if $\sigma_i = -1$.}
 \end{cases}
\]
The set $S$ is shattered by $\cF_1$
by taking $\Tilde{r}= \lr{r_1+\eta(x_1,y_1),\ldots,r_m+\eta(x_m,y_m)}$. Similarly, any set that is shattered by $\cF_1$ is also shattered by $\cH$.

The class $\cF_2$ consists of choosing a function from $\cF_1$ and computing its pointwise maximum with the constant function 0.
In general, for two function classes $\cG_1,\cG_2$, we can define the maximum aggregation class
\begin{equation*}
 \max(\cG_1,\cG_2)=\{x\mapsto\max\lrset{g_1(x),g_2(x)}:g_i \in \cG_i\},
\end{equation*}
\citet{kontorovich2021fat} showed that for any $\cG_1,\cG_2$
\begin{align*}
 \fat\lr{ \max(\cG_1,\cG_2),\gamma}   \lesssim
\lr{\fat\lr{\cG_1,\gamma}+\fat\lr{\cG_2,\gamma}}\log^2\lr{\fat\lr{\cG_1,\gamma}+\fat\lr{\cG_2,\gamma}}.
\end{align*}
Taking $\cG_1=\cF_1$ and $\cG_2\equiv 0$, we get 
\begin{align*}
 \fat\lr{\cF_2,\gamma}   \lesssim
\fat\lr{\cF_1,\gamma}\log^2\lr{\fat\lr{\cF_1,\gamma}}.
\end{align*}
For the particular case $\cG_2\equiv 0$, we can show a better bound of 
\begin{align*}
 \fat\lr{\cF_2,\gamma}   \lesssim
\fat\lr{\cF_1,\gamma}.
\end{align*}
In words, it means that truncation cannot increase the shattering dimension.
Indeed, take a set $S=\lrset{(x_1,y_1),\ldots,(x_k,y_k)}$ that is $\gamma$-shattered by $\cF_2=\max\lr{\cF_1,0}$, we show that this set is  $\gamma$-shattered by $\cF_1$.
There exists a witness $r = (r_1, \ldots,r_k) \in [0,1]^k$ such that for each $\sigma = (\sigma_1, \ldots, \sigma_k) \in \{-1, 1\}^k$ there is a function $f_{\sigma} \in \cF_1$ such that
\[
\forall i \in [k] \; 
\begin{cases}
    \max\lrset{f_{\sigma}((x_i,y_i)),0} \geq r_i + \gamma, & \text{if $\sigma_i = 1$}\\
    \max\lrset{f_{\sigma}((x_i,y_i)),0} \leq r_i - \gamma, & \text{if $\sigma_i = -1$.}
 \end{cases}
\]
For $\max\lrset{f_{\sigma}((x_i,y_i)),0} \leq r_i - \gamma$, we simply have that $f_{\sigma}((x_i,y_i)) \leq r_i - \gamma$. Moreover, this implies that $r_i\geq \gamma$. As a result, 
\begin{align*}
     \max\lrset{f_{\sigma}((x_i,y_i)),0} 
     &\geq 
     r_i + \gamma
     \\
     &\geq
     2\gamma
    \\
     &>
     0,
\end{align*}
which means that $f_{\sigma}((x_i,y_i)) \geq r_i + \gamma$. This shows that $\cF_1$ $\gamma$-shatters $S$ as well.
We can conclude the proof by applying \cref{thm:generalization-interpolation} to the class $\cF_2$ with $t(x)=0$ and $\eta=\alpha$.
\end{proof}

% \begin{theorem}[Strong generalization from approximate interpolation with changing cutoffs]
% \label{thm:strong-generalization-interpolation-cutoffs}
% Let $\cF\subseteq [0,1]^\cX$ with a finite fat-shattering dimension (at any scale). For any $\epsilon,\delta\in(0,1)$, any function $\eta:\cX\times \cY \rightarrow [0,1]$, any distribution $P$ over $\cX\times\cY$, for a random sample $S\sim P^m$, if 
% $$
% m=\cO\lr{\frac{1}{\epsilon}\lr{\pseudo(\cF)\log\lr{\frac{1}{\epsilon}}+\log\frac{1}{\delta}}},
% $$ 
% then with probability at least $1-\delta$ over $S$, for any ${f}\in\cF$ satisfying $\lrabs{{f}(x)-y}\leq \psi(x,y)$, $\forall{(x,y)\in S}$,
% it holds that $\hP_{(x,y)\sim P}\lrset{(x,y): \lrabs{f(x)-y} \leq \psi(x,y)}\geq 1- \epsilon$.
% \end{theorem}

The following boosting and sparsification claims were proven for the case of a fixed cut-off parameter. The proofs extend similarly to the case of a changing cut-off parameter $\psi : \cX \times \cY \rightarrow [0,1]$.
\paragraph{Boosting.}
Following~\cite{hanneke2019sample}, we define the weighted median as 
\begin{equation*}
    \med\lr{y_1,\ldots,y_T;\alpha_1,\ldots,\alpha_T} 
    = 
    \min\lrset{y_j: \frac{\sum^T_{t=1}\alpha_t\hI\lrbra{y_j<y_t}}{\sum^T_{t=1}\alpha_t}<\frac{1}{2}} \,,
\end{equation*}
and the weighted quantiles, for $\beta \in [0,1/2]$, as
\begin{align*}
    Q^+_\beta(y_1,\ldots,y_T;\alpha_1,\ldots,\alpha_T) = \min\lrset{y_j: \frac{\sum^T_{t=1}\alpha_t\hI\lrbra{y_j<y_t}}{\sum^T_{t=1}\alpha_t}<\frac{1}{2} - \beta} \\
     Q^-_\beta(y_1,\ldots,y_T;\alpha_1,\ldots,\alpha_T) = \max\lrset{y_j: \frac{\sum^T_{t=1}\alpha_t\hI\lrbra{y_j>y_t}}{\sum^T_{t=1}\alpha_t}<\frac{1}{2} - \beta}.
\end{align*}
We define $Q^+_\beta(f_1,\ldots,f_T;\alpha_1,\ldots,\alpha_T)(x) = Q^+_\beta(f_1(x),\ldots,f_T(x);\alpha_1,\ldots,\alpha_T)$, and 
$Q^-_\beta(f_1,\ldots,f_T;\alpha_1,\ldots,\alpha_T)(x) = Q^-_\beta(f_1(x),\ldots,f_T(x);\alpha_1,\ldots,\alpha_T)$, 
and $\med(f_1,\ldots,f_T;\alpha_1,\ldots,\alpha_T)(x) = \med(f_1(x),\ldots,f_T(x);\alpha_1,\ldots,\alpha_T)$.
We omit the weights $\alpha_i$ when they are equal to each other.
The following guarantee holds for the boosting procedure.
\begin{lemma}\label{lem:boosting}
Let $S=\{(x_i,y_i)\}^m_{i=1}$, $T=O\lr{\frac{1}{\beta^2}\log(m)}$. Let $\hat{f}_1,\ldots,\hat{f}_T$ and $\alpha_1,\ldots,\alpha_T$ be the functions and coefficients returned from the median boosting procedure with changing cut-offs (Step 2 in \Cref{alg:compression}). For any $i\in\lrset{1,\ldots,m}$ it holds that
\begin{equation*}
  \max\lrset{\lrabs{Q^+_{\beta/2}(\hat{f}_1,\ldots,\hat{f}_T;\alpha_1,\ldots,\alpha_T))(x_i)- y_i}, \lrabs{Q^-_{\beta/2}\hat{f}_1,\ldots,\hat{f}_T;\alpha_1,\ldots,\alpha_T)(x_i) - y_i}} \leq \psi(x,y)+2\alpha.
\end{equation*}
\end{lemma}

\paragraph{Sparsification.} 
\begin{lemma}\label{lem:sparsification}
    Choosing 
    $$n=\Theta\lr{\frac{1}{\beta^2}\fat^*\lr{\cF,c\alpha}\log^2\lr{\fat^*\lr{\cF,c\alpha}/\alpha}}$$ 
    in Step 3 of \Cref{alg:compression}, we have for all $(x,y)\in S$
    $
     \lrabs{\med\lr{f_{J_1}(x),\ldots,f_{J_n}(x)}-y} \leq \psi(x,y)+3\alpha,
    $
    where $c>0$ is a universal constant.
\end{lemma}

\end{document}